\newtheorem{assumption}{Assumption}
\newtheorem{corollary}{Corollary}
\newtheorem{theorem}{Theorem}
\newtheorem{lemma}{Lemma}
\newcolumntype{M}[1]{>{\centering\arraybackslash}m{#1}}
\newcolumntype{N}{@{}m{0pt}@{}}
\begin{document}

% If your paper is accepted and the title of your paper is very long,
% the style will print as headings an error message. Use the following
% command to supply a shorter title of your paper so that it can be
% used as headings.
%
\runningtitle{Improved Sample Complexity of NPG with General Parameterization}

% If your paper is accepted and the number of authors is large, the
% style will print as headings an error message. Use the following
% command to supply a shorter version of the authors names so that
% they can be used as headings (for example, use only the surnames)
%
%\runningauthor{Surname 1, Surname 2, Surname 3, ...., Surname n}

\twocolumn[

\aistatstitle{Improved Sample Complexity Analysis of Natural Policy Gradient Algorithm with General Parameterization for Infinite Horizon Discounted Reward Markov Decision Processes}

\aistatsauthor{ Washim Uddin Mondal \And  Vaneet Aggarwal }

\aistatsaddress{ Purdue University, USA} ]

\begin{abstract}
We consider the problem of designing sample efficient learning algorithms for infinite horizon discounted reward Markov Decision Process. Specifically, we propose the Accelerated Natural Policy Gradient (ANPG) algorithm that utilizes an accelerated stochastic gradient descent process to obtain the natural policy gradient. ANPG achieves $\mathcal{O}({\epsilon^{-2}})$ sample complexity and $\mathcal{O}(\epsilon^{-1})$ iteration complexity with general parameterization where $\epsilon$ defines the optimality error. This improves the state-of-the-art sample complexity by a $\log(\frac{1}{\epsilon})$  factor. ANPG is a first-order algorithm and unlike some existing literature, does not require the unverifiable assumption that the variance of importance sampling (IS) weights is upper bounded. In the class of Hessian-free and IS-free algorithms, ANPG beats the best-known sample complexity by a factor of $\mathcal{O}(\epsilon^{-\frac{1}{2}})$ and simultaneously matches their state-of-the-art iteration complexity.
\end{abstract}
\section{Introduction}

Reinforcement Learning (RL) is a sequential decision-making framework that finds its applications in a wide range of areas, ranging from epidemic control to transportation to online marketing \citep{ling2023cooperating, singh2023maximize, al2019deeppool}. The objective of an RL agent is to obtain a policy that maximizes the discounted sum of expected rewards that are generated due to its interaction with the environment. One way to solve this problem is called the policy gradient (PG) approach which performs the optimization directly in the policy space. Value function-based approaches, on the contrary, optimize the $Q$-functions which are then used to filter out the policies. 

Paired with the general function approximation, PG-based methods turn out to be an effective tool in handling large state spaces. Specifically, deep neural network (DNN)-based PG has shown tremendous success empirically \citep{schulman2015trust,schulman2017proximal}. The idea of function approximation is to parameterize the policies by a $\mathrm{d}$-dimensional parameter, $\theta$ so that the optimization can be performed over $\mathbb{R}^{\mathrm{d}}$. In this case, however, the objective function, $J(\theta)$, appears to be non-convex. As a result, many articles primarily analyze the first-order stationary (FOS) convergence properties of $J(\theta)$. For example, \citep{yuan2022general} derives $\tilde{O}(\epsilon^{-4})$ sample complexity for Vanilla-PG algorithm to achieve an  $\epsilon$-FOS error.

\begin{table*}[ht]
    \centering
    \begin{tabular}{|c|c|c|c|c|}
    \hline
    Algorithm & Sample Complexity & Iteration Complexity & Hessian-free & IS-free \\ 
    \hline 
    & & & &\\[-0.35cm]
    Vanilla-PG \citep{yuan2022general} & $\Tilde{\mathcal{O}}(\epsilon^{-3})$ & $\mathcal{O}(\epsilon^{-3})$ &Yes & Yes \\
    \hline
    & & & &\\[-0.35cm]
    STORM-PG-F \citep{ding2022global} & $\tilde{\mathcal{O}}((1+W)^{\frac{3}{2}}\epsilon^{-3})$ & $\mathcal{O}(\epsilon^{-3})$ & Yes & No \\
    \hline
    & & & &\\[-0.35cm]
    SCRN \citep{masiha2022stochastic} & $\tilde{\mathcal{O}}(\epsilon^{-2.5})$ & $\mathcal{O}(\epsilon^{-0.5})$ & No & Yes \\
    \hline
    & & & &\\[-0.35cm]
    VR-SCRN \citep{masiha2022stochastic} & $\mathcal{O}(\epsilon^{-2}\log\left(\frac{1}{\epsilon}\right))$ & $\mathcal{O}(\epsilon^{-0.5})$ & No & No \\[0.05cm]
    \hline
    & & & &\\[-0.35cm]
    NPG \citep{liu2020improved} & $\mathcal{O}(\epsilon^{-3})$ & $\mathcal{O}(\epsilon^{-1})$ & Yes & Yes \\
    \hline
    & & & &\\[-0.35cm]
    SRVR-NPG \citep{liu2020improved} & $\mathcal{O}(W\epsilon^{-2.5}+\epsilon^{-3})$ & $\mathcal{O}(\epsilon^{-1})$ & Yes & No \\
    \hline
    & & & &\\[-0.35cm]
    SRVR-PG \citep{liu2020improved} & $\mathcal{O}(W\epsilon^{-3})$ & $\mathcal{O}(\epsilon^{-2})$ & Yes & No \\
    \hline
    & & & &\\[-0.35cm]
    N-PG-IGT \citep{fatkhullin2023stochastic} & $\tilde{\mathcal{O}}(\epsilon^{-2.5})$ & $\mathcal{O}(\epsilon^{-2.5})$ & Yes & Yes\\ 
    \hline
    & & & &\\[-0.35cm]
    HARPG \citep{fatkhullin2023stochastic} & $\mathcal{O}(\epsilon^{-2}\log\left(\frac{1}{\epsilon}\right))$ & $\mathcal{O}(\epsilon^{-2})$ & No & Yes\\ [0.05cm]
    \hline
    & & & &\\[-0.35cm]
    \textbf{ANPG (This work)}  & $\mathcal{O}(\epsilon^{-2})$ & $\mathcal{O}(\epsilon^{-1})$ & Yes & Yes \\[0.05cm]
    \hline
    & & & &\\[-0.35cm]
    Lower Bound \citep{azar2017minimax} & $\mathcal{O}(\epsilon^{-2})$ & $-$ & $-$ & $-$\\
    \hline
    \end{tabular}
    \caption{Summary of recent sample complexity results for global convergence in discounted reward MDPs with general parameterized policies. The term, ``Hessian-free" states that the underlying algorithm only utilizes first-order information. The ``IS-free" column states whether a bound $(W)$ on the variance of the importance sample (IS) weights is assumed. It is worthwhile to mention here that although the HARPG algorithm is Hessian-aided, its per-iteration computational and memory requirements are similar to that of Hessian-free methods \citep{fatkhullin2023stochastic}. We have explicitly written the logarithmic terms in the sample complexities of HARPG and VR-SCRN to highlight the superiority of our result.}
    \label{tab:related_works}
\end{table*}

In this paper, however, we are interested in the global convergence property of $J(\theta)$. It can be shown that under the assumption of Fisher Non-Degeneracy (FND), $J(\theta)$ satisfies the gradient domination property which implies that  there does not exist any FOS sub-optimal points of $J(\theta)$. As a result, one can achieve any $\epsilon$ global optimum (GO) gap, despite the objective function being non-convex. For example, \citep{liu2020improved} derives $\mathcal{O}(\epsilon^{-4})$ and $\mathcal{O}(\epsilon^{-3})$ sample complexities of the PG and the Natural PG (NPG) algorithms respectively corresponding to an $\epsilon$  GO gap. In recent years, importance sampling (IS) has garnered popularity as an important sample complexity-reducing method. \citep{liu2020improved} utilizes IS to propose  variance-reduction (VR) versions of PG and NPG algorithms, each achieving a sample complexity of $\tilde{\mathcal{O}}(\epsilon^{-3})$. The caveat of IS-based methods is that one needs to impose a bound on the variance of the IS weights. This is a strong assumption and might not be verifiable for most practical cases. Alternatives to the IS-based method include the momentum-based N-PG-IGT algorithm of \citep{fatkhullin2023stochastic} and the Hessian-based SCRN algorithm of \citep{masiha2022stochastic}, both of which achieve $\tilde{\mathcal{O}}(\epsilon^{-2.5})$ sample complexity. Hessian-based (second-order) approaches typically have more memory 
and computational requirements in comparison to the first-order algorithms and thus, are less preferred in practice.
The state-of-the-art (SOTA)
$\tilde{\mathcal{O}}(\epsilon^{-2})$ sample complexity is achieved by two recently proposed algorithms: the VR-SCRN by
\citep{masiha2022stochastic}
and the HARPG proposed by \citep{fatkhullin2023stochastic}. The first one uses a combination of Hessian and IS-based methods whereas the other one solely applies a Hessian-aided technique. This leads to the following question.

\begin{center}
\fbox{
\parbox{0.9\linewidth}{Does there exist an IS-free and Hessian-free algorithm that either achieves or improves the SOTA $\tilde{\mathcal{O}}(\epsilon^{-2})$ sample complexity?}
}    
\end{center}

\subsection{Our Contributions and Challenges}

In this paper, we provide an affirmative answer to  the above question. In particular,

$\bullet$ We propose an acceleration-based NPG (ANPG) algorithm that uses accelerated stochastic gradient descent (ASGD) to determine an estimate of the natural policy gradient.

$\bullet$ We prove that ANPG achieves a sample complexity of $\mathcal{O}(\epsilon^{-2})$ and an iteration complexity of $\mathcal{O}(\epsilon^{-1})$.

$\bullet$ ANPG is IS-free and does not need any second-order (Hessian-related) computation.

Note that the best known sample complexity is $\tilde{\mathcal{O}}(\epsilon^{-2})$ $=\mathcal{O}\left(\epsilon^{-2}\log\left(\frac{1}{\epsilon}\right)\right)$. Clearly, our proposed ANPG algorithm improves the SOTA  sample complexity by a factor of $\log\left(\frac{1}{\epsilon}\right)$. 
Additionally, in the class of Hessian-free and IS-free algorithms, it improves the SOTA sample complexity
by a factor of $\epsilon^{-0.5}$  (see Table \ref{tab:related_works}).  Note that our sample complexity matches the lower bound and hence, cannot be improved further. Moreover, the iteration complexity of our algorithm matches the SOTA iteration complexity of Hessian-free and IS-free algorithms and beats that of the HARPG algorithm by a factor of $\mathcal{O}(\epsilon^{-1})$.

The NPG algorithm updates the policy parameter, $\theta$ for $K$ iterations and at each iteration, $k\in\{0,  \cdots, K-1\}$, at estimate, $\omega_k$, of the true natural gradient, $\omega_k^*$ is calculated via an $H$-step SGD method. The iteration complexity of the NPG, as shown in \citep{liu2020improved} is $\mathcal{O}(K)=\mathcal{O}(\epsilon^{-1})$. This is difficult to improve since even with exact natural gradients and softmax parameterization, we obtain the same result \citep[Theorem 16]{agarwal2021theory}. On the other hand, the estimation error bound $\mathbb{E}\Vert \omega_k - \omega_k^* \Vert = \mathcal{O}(1/\sqrt{H})$ is standard and hard to change. This leads to a choice of $H=\mathcal{O}(\epsilon^{-2})$, resulting in a sample complexity of $\mathcal{O}(\epsilon^{-3})$. Therefore, the sample complexity of the NPG, in its current form, seems to be hard to break. We introduce the following modifications to surpass this apparent barrier. Firstly, the intermediate direction finding SGD routine is substituted by the ASGD procedure. Next, by improving the global convergence analysis, we show that the first-order estimation error can be more accurately written as $\mathbf{E}\Vert (\mathbf{E}[\omega_k|\theta_k]-\omega_k^*)\Vert$ where $\theta_k$ is the policy parameter at the $k$th iteration. Finally, we prove that the stated  term can be interpreted as the approximation error of a noiseless (deterministic) accelerated gradient descent process and hence can be upper bounded as $\exp(-\kappa H)$ for some $H$-independent parameter $\kappa$. 
This allows us to take 
$H=\mathcal{O}(\epsilon^{-1})$, and resulting in $\mathcal{O}(\epsilon^{-2})$ sample complexity.

\subsection{Related Works}

$\bullet$ \textbf{Convergence with Exact Gradients:} A series of papers in the literature explore the global convergence rate (or equivalently, iteration complexity) of PG-type algorithms assuming access to the exact gradients. For example, \citep{agarwal2021theory} shows $\mathcal{O}(\epsilon^{-2})$ iteration complexity of the exact PG with softmax parameterization and log barrier regularization. \citep{mei2020global} obtains an improved $\mathcal{O}(\epsilon^{-1})$ result without regularization and $\mathcal{O}(\log\left(\frac{1}{\epsilon}\right))$ result with regularization. \citep{shani2020adaptive} proves   $\mathcal{O}(\epsilon^{-1})$ iteration complexity for the exact regularized NPG. 
\citep{agarwal2021theory} proves an identical result for the unregularized NPG.
Finally, $\mathcal{O}(\log\left(\frac{1}{\epsilon}\right))$ iteration complexity is recently shown both for regularized \citep{cen2022fast,lan2023policy,zhan2023policy} and unregularized \citep{bhandari2021linear, xiao2022convergence} NPG. %Recently, \citep{yuan2022linear} has extended the result of \citep{xiao2022convergence} for tabular setup to stochastic gradients with log-linear policies and shown logarithmic iteration and quadratic sample complexities.

$\bullet$ \textbf{FOS Sample Complexity:} Many variance reduction methods exist in the literature that reaches the $\epsilon$-FOS point faster than the Vanilla-PG algorithm \citep{yuan2022general}. The majority of these works incorporate some form of the importance sampling (IS) technique. Works that fall into this category include \citep{papini2018stochastic, xu2019sample, xu2020improved, gargiani2022page, pham2020hybrid, yuan2020stochastic, huang2020momentum}, etc. As an alternative to the IS-based method, some papers  \citep{salehkaleybar2022adaptive, shen2019hessian} resort to second-order (Hessian-related) information. The techniques stated above are sometimes used in conjunction with momentum-based updates \citep{salehkaleybar2022adaptive, huang2020momentum}. 

$\bullet$ \textbf{GO Sample Complexity:} Most of the works that target global optimal (GO) convergence using stochastic gradients have been discussed earlier. Alongside, a plethora of works has emerged that investigate the GO convergence via actor-critic methods with linear function approximation \citep{wu2020finite, chen2022finite, chen2022sample, khodadadian2022finite, qiu2021finite}. These works assume the underlying MDP to be ergodic. Fortunately, such an assumption is not a requirement for our analysis. As mentioned before, we
assume Fisher Non-Degenerate (FND) policies and the compatible function approximation framework (defined in section \ref{sec:samle_compexity}) to arrive at our result. A similar  analysis of the PG algorithm with general parameterization has been recently explored for average reward MDPs \citep{bai2023regret}.

\section{Problem Formulation}

We consider a Markov Decision Process (MDP) characterized by a tuple $\mathcal{M}=(\mathcal{S}, \mathcal{A}, r, P, \gamma, \rho)$ where $\mathcal{S}, \mathcal{A}$ are the state space and the action space respectively, $r:\mathcal{S}\times \mathcal{A}\rightarrow [0, 1]$ is the reward function, $P:\mathcal{S}\times \mathcal{A}\rightarrow \Delta^{|\mathcal{S}|}$ is the state transition kernel (where $\Delta^{|\mathcal{S}|}$ indicates a probability simplex of size $|\mathcal{S}|$), $\gamma\in (0, 1)$ denotes the reward discount factor, and $\rho\in\Delta^{|\mathcal{S}|}$ is the initial state distribution. A policy $\pi:\mathcal{S}\rightarrow \Delta^{|\mathcal{A}|}$ characterizes the distribution of action to be executed given the current state. For a state-action pair $(s, a)$, the $Q$-function of a policy $\pi$ is defined as follows.
\begin{align*}
    Q^{\pi}(s, a) \triangleq \mathbf{E}\left[\sum_{t=0}^{\infty} \gamma^t r(s_t, a_t)\bigg| s_0=s, a_0=a\right]
\end{align*}
where the expectation is computed over all $\pi$-induced trajectories $\{(s_t, a_t)\}_{t=0}^{\infty}$ where $s_{t}\sim P(s_{t-1}, a_{t-1})$, and $a_t\sim \pi(s_t)$, $\forall t\in\{1, 2, \cdots\}$. Similarly, the $V$-function of the policy $\pi$ is defined as,
\begin{align*}
\begin{split}
        V^{\pi}(s) &\triangleq \mathbf{E}\left[\sum_{t=0}^{\infty} \gamma^t r(s_t, a_t)\bigg| s_0=s\right] \\
        &= \sum_{a\in\mathcal{A}} \pi(a|s)Q^{\pi}(s, a)
\end{split}
\end{align*}

Finally, the advantage function is defined as,
\begin{align*}
    A^{\pi}(s, a) \triangleq Q^{\pi}(s, a) - V^{\pi}(s), ~\forall (s, a)\in\mathcal{S}\times\mathcal{A}
\end{align*}

Our objective is to maximize the function defined below over the class of all policies.
\begin{align*}
    J^{\pi}_{\rho} \triangleq \mathbf{E}_{s\sim \rho}\left[V^{\pi}(s)\right] = \dfrac{1}{1-\gamma}\sum_{s, a}d^{\pi}_{\rho}(s, a)\pi(a|s)r(s, a)
\end{align*}
where the state occupancy $d^{\pi}_{\rho}\in\Delta^{|\mathcal{S}|}$ is defined as,
\begin{align*}
\begin{split}
    d^{\pi}_{\rho}(s) &\triangleq (1-\gamma)\sum_{t=0}^\infty \gamma^t \mathrm{Pr}(s_t=s|s_0\sim\rho, \pi),~\forall s\in\mathcal{S}
\end{split}
\end{align*}
We define the state-action occupancy  $\nu^{\pi}_\rho\in\Delta^{|\mathcal{S}\times \mathcal{A}|}$ as $\nu_\rho^{\pi}(s, a) \triangleq d^{\pi}_\rho(s)\pi(a|s)$, $\forall (s, a)\in\mathcal{S}\times \mathcal{A}$. 

In many application scenarios, the state space is quite large which calls for the parameterization of the policy functions by a deep neural network with $\mathrm{d}$-dimensional parameters. Let, $\pi_{\theta}$ indicate the policy corresponding to the parameter $\theta\in \mathbb{R}^{\mathrm{d}}$. In this case, our maximization problem can be restated as shown below.
\begin{align}
    \max_{\theta\in\mathbb{R}^{\mathrm{d}}}~J^{\pi_{\theta}}_{\rho} 
    \label{eq:max_J}
\end{align}
For notational convenience, we denote $J_\rho^{\pi_\theta}$ as $J_\rho(\theta)$ in the rest of the paper.

\section{Algorithm}

One way to solve the maximization $(\ref{eq:max_J})$ is via updating the policy parameters by applying the gradient ascent: $\theta_{k+1} = \theta_k + \eta \nabla_{\theta}J_{\rho}(\theta_k)$, $k\in\{0, 1, \cdots\}$ starting with an initial parameter, $\theta_0$. Here, $\eta>0$ denotes the learning rate, and the policy gradients (PG) are given as follows \citep{sutton1999policy}.
\begin{equation}
%\begin{split}
        \nabla_{\theta} J_{\rho}(\theta)
        =\dfrac{1}{1-\gamma}H_\rho(\theta), \text{ where }  
%\end{split}
\label{eq:pg_expression}
\end{equation}
%where $H_\rho(\theta)$ is given as,
\begin{equation}
  H_{\rho}(\theta) \triangleq \mathbf{E}_{(s, a)\sim \nu_{\rho}^{\pi_\theta}}\big[A^{\pi_\theta}(s, a)\nabla_{\theta}\log \pi_{\theta}(a|s)\big]\label{hrho}
\end{equation}

In contrast, this paper uses the natural policy gradient (NPG) to update the policy parameters. In particular, $\forall k\in\{1, 2, \cdots\}$, we have,
\begin{align}
    \theta_{k+1} = \theta_k + \eta F_{\rho}(\theta_k)^{\dagger}\nabla_{\theta} J_{\rho}(\theta_k)
\end{align}
where $\dagger$ is the Moore-Penrose pseudoinverse operator, and $F_{\rho}$ is called the Fisher information function which is defined as follows $\forall \theta\in \mathbb{R}^{\mathrm{d}}$.
\begin{align}
\begin{split}
    F_{\rho}(\theta) \triangleq \mathbf{E}_{(s, a)\sim \nu^{\pi_\theta}_{\rho}}\big[
    \nabla_{\theta}\log \pi_{\theta}(a|s)\otimes\nabla_{\theta}\log \pi_{\theta}(a|s)\big]
\end{split} 
\label{eq:def_F_rho_theta}
\end{align}
where $\otimes$ denotes the outer product. In particular, for any $\mathbf{a}\in\mathbb{R}^{\mathrm{d}}$, $\mathbf{a}\otimes \mathbf{a} = \mathbf{aa}^{\mathrm{T}}$.

Note that, in comparison to a PG-based iteration, the advantage of an NPG-based iteration is that the latter modulates the learning parameter, $\eta$, according to the rate of change of the policy function itself. This results in a lower sample complexity as shown in the previous study \citep{liu2020improved}. As mentioned before, our goal in this paper is to show that the sample complexity of the NPG-based update can be reduced further. 

Let $\omega^*_{\theta}\triangleq F_{\rho}(\theta)^{\dagger}\nabla_{\theta} J_{\rho}(\theta)$. Notice that we have removed the dependence of $\omega^*_{\theta}$ on $\rho$ for notational convenience. Invoking $\eqref{eq:pg_expression}$, the term, $\omega_{\theta}^*$ can be written as a solution of a quadratic optimization \citep{peters2008natural}. Specifically, $\omega_{\theta}^* = {\arg\min}_{\omega} L_{\nu^{\pi_\theta}_\rho}(\omega, \theta)$ where $L_{\nu^{\pi_\theta}_\rho}(\omega, \theta)$ is the compatible function approximation error and it is mathematically defined as,
\begin{align}
\label{eq:def_app_error}
\begin{split}
    ~L_{\nu^{\pi_\theta}_\rho}(\omega, \theta)\triangleq \dfrac{1}{2}\mathbf{E}_{(s, a)\sim \nu^{\pi_\theta}_{\rho}}&\bigg[\dfrac{1}{1-\gamma}A^{\pi_{\theta}}(s, a)  \\
    &\hspace{0.0cm} -  \omega^{\mathrm{T}}\nabla_{\theta}\log \pi_{\theta}(a|s)\bigg]^2
\end{split}
\end{align}

Using the above notations, NPG updates can be written as $\theta_{k+1}=\theta_k+\eta \omega_k^*$, $k\in\{1, 2, \cdots\}$ where $\omega_k^*=\omega^*_{\theta_k}$. However, in most practical scenarios, the learner does not have knowledge of the state transition probabilities which makes it difficult to directly calculate $\omega_{k}^*$. Below we clarify how one can get its sample-based estimates. 
Note that the gradient of $L_{\nu^{\pi_\theta}_\rho}(\cdot, \theta)$ can be obtained as shown below for any arbitrary $\theta$. \hspace{3cm}
\begin{align}
    \nabla_{\omega}L_{\nu^{\pi_\theta}_\rho}(\omega, \theta) = F_{\rho}(\theta)\omega - \dfrac{1}{1-\gamma}H_{\rho}(\theta)
\end{align}
%where $F_\rho(\theta)$ and $H_\rho(\theta)$ are defined in $(\ref{eq:def_F_rho_theta})$ and \eqref{hrho}, respectively. 

Although, in general, it is difficult to obtain $F_{\rho}(\theta)$ and $H_\rho(\theta)$ exactly, their (unbiased) estimates can be easily obtained via Algorithm \ref{algo_sampling}. This is similar to Algorithm 3 stated in \citep{agarwal2021theory}. The Algorithm first runs the MDP for $T$ time instances following the policy $\pi_{\theta}$ starting from $s_0\sim \rho$ where $T$ is a geometric random variable with success probability $1-\gamma$. The tuple $(s_T, a_T)$ is taken as the state-action pair $(\hat{s}, \hat{a})$ sampled from the distribution $\nu^{\pi_\theta}_{\rho}$. With probability $\frac{1}{2}$, the algorithm then generates another $\pi_{\theta}$-induced trajectory of length $T\sim \mathrm{Geo}(1-\gamma)$ starting from the pair $(\hat{s}, \hat{a})$, and assigns twice of the total reward of this trajectory as the estimate $\hat{Q}^{\pi_\theta}(\hat{s}, \hat{a})$. In its complementary event, the trajectory initiates with $\hat{s}$, and the twice of its total reward is assigned as $\hat{V}^{\pi_\theta}(\hat{s})$. Its associated advantage estimate is obtained as $\hat{A}^{\pi_{\theta}}(\hat{s}, \hat{a})=\hat{Q}^{\pi_{\theta}}(\hat{s}, \hat{a})-\hat{V}^{\pi_{\theta}}(\hat{s})$. Finally, the estimate of the gradient is computed as,
\begin{align}
    \hat{\nabla}_\omega L_{\nu^{\pi_\theta}_\rho}(\omega, \theta) =  \hat{F}_{\rho}(\theta)\omega - \dfrac{1}{1-\gamma}\hat{H}_{\rho}(\theta)
\end{align}
where $\hat{F}_{\rho}(\theta)$ and $\hat{H}_{\rho}(\theta)$ are given as follows.
\begin{align}
    \begin{split}
        &\hat{F}_{\rho}(\theta) = \nabla_{\theta}\log\pi_{\theta}(\hat{a}|\hat{s})\otimes\nabla_{\theta}\log\pi_{\theta}(\hat{a}|\hat{s})\\
        &\hat{H}_{\rho}(\theta) = \hat{A}^{\pi_\theta}(\hat{s}, \hat{a})\nabla_{\theta}\log \pi_\theta(\hat{a}|\hat{s})
    \end{split}
\end{align}
The following lemma ensures that the estimate of the gradient generated by Algorithm \ref{algo_sampling} is indeed unbiased.  

\begin{algorithm}[t]
    \begin{algorithmic}[1]
    \caption{Unbiased Sampling}
        \State \textbf{Input:} Parameters $(\theta,\omega)$, Initial Distribution $\rho$
        \vspace{0.2cm}
        \State $T\sim \mathrm{Geo}(1-\gamma)$
        \State Sample $s_0\sim \rho$, $a_0\sim \pi_{\theta}(s_0)$
        \For{$j\in\{0,\cdots, T-1\}$}
            \State Execute $a_{j}$ and observe $s_{j+1}\sim P(s_j, a_j)$
            \State Sample $a_{j+1}\sim \pi_{\theta}(s_{j+1})$
        \EndFor
        \State $(\hat{s}, \hat{a})\gets (s_{T}, a_{T})$
        \vspace{0.2cm}
        \State $(s_0, a_0)\gets (s_{T}, a_{T})$
        \State $T\sim \mathrm{Geo}(1-\gamma)$, $X\sim \mathrm{Bernoulli}(\frac{1}{2})$
        \If{$X=1$}
        \State Sample $a_0\sim \pi_{\theta}(s_0)$
        \EndIf
        \For{$j\in\{0,\cdots, T-1\}$}
            \State Execute $a_{j}$ and observe $s_{j+1}\sim P(s_j, a_j)$
            \State Sample $a_{j+1}\sim \pi_{\theta}(s_{j+1})$
        \EndFor
        \vspace{0.2cm}
        \State $\hat{Q}^{\pi_{\theta}}(\hat{s}, \hat{a})\gets 2(1-X)\sum_{j=0}^{T}r(s_j, a_j)$
        \State $\hat{V}^{\pi_{\theta}}(\hat{s})\gets 2X\sum_{j=0}^{T}r(s_j, a_j)$
        \State $\hat{A}^{\pi_{\theta}}(\hat{s}, \hat{a})\gets \hat{Q}^{\pi_{\theta}}(\hat{s}, \hat{a})-\hat{V}^{\pi_{\theta}}(\hat{s})$
        \vspace{0.2cm}
        \State $\hat{F}_{\rho}(\theta)\gets \nabla_{\theta}\log\pi_{\theta}(\hat{a}|\hat{s})\otimes\nabla_{\theta}\log\pi_{\theta}(\hat{a}|\hat{s})$
        \State $\hat{H}_{\rho}(\theta)\gets \hat{A}^{\pi_\theta}(\hat{s}, \hat{a})\nabla_{\theta}\log \pi_\theta(\hat{a}|\hat{s})$
        \vspace{0.2cm}
        \State \textbf{Output:} $\hat{F}_{\rho}(\theta)\omega -\dfrac{1}{1-\gamma}\hat{H}_{\rho}(\theta)$
        \label{algo_sampling}
    \end{algorithmic}
\end{algorithm}

\begin{lemma}
    \label{lemma:unbiased_estimate}
    If $\hat{\nabla}_\omega L_{\nu^{\pi_\theta}_\rho}(\omega, \theta)$ denotes the gradient estimate yielded by Algorithm \ref{algo_sampling}, then the following holds.
    \begin{align}
        \mathbf{E}\left[\hat{\nabla}_\omega L_{\nu^{\pi_\theta}_\rho}(\omega, \theta)\big| \omega, \theta\right] = \nabla_\omega L_{\nu^{\pi_\theta}_\rho}(\omega, \theta)
    \end{align}
\end{lemma}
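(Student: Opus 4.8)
The plan is to exploit linearity of expectation to reduce the claim to the unbiasedness of the two building blocks produced by Algorithm \ref{algo_sampling}. Since the output is $\hat{F}_\rho(\theta)\omega-\frac{1}{1-\gamma}\hat{H}_\rho(\theta)$, the true gradient is $F_\rho(\theta)\omega-\frac{1}{1-\gamma}H_\rho(\theta)$, and $\omega$ is held fixed by the conditioning, it suffices to prove $\mathbf{E}[\hat{F}_\rho(\theta)\mid\theta]=F_\rho(\theta)$ and $\mathbf{E}[\hat{H}_\rho(\theta)\mid\theta]=H_\rho(\theta)$. The first step common to both is to verify that the sampled pair $(\hat{s},\hat{a})=(s_T,a_T)$ has law $\nu^{\pi_\theta}_\rho$. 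This follows from the geometric-horizon trick: with $T\sim\mathrm{Geo}(1-\gamma)$ one has $\Pr(T=t)=(1-\gamma)\gamma^t$, so $\Pr(s_T=s)=\sum_{t=0}^\infty(1-\gamma)\gamma^t\Pr(s_t=s\mid s_0\sim\rho,\pi_\theta)=d^{\pi_\theta}_\rho(s)$ by the definition of the state occupancy, and since $a_T\sim\pi_\theta(\cdot\mid s_T)$, the pair has law $d^{\pi_\theta}_\rho(s)\pi_\theta(a\mid s)=\nu^{\pi_\theta}_\rho(s,a)$.

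Given this, the Fisher part is immediate: $\mathbf{E}[\hat{F}_\rho(\theta)\mid\theta]=\mathbf{E}_{(\hat{s},\hat{a})\sim\nu^{\pi_\theta}_\rho}[\nabla_\theta\log\pi_\theta(\hat{a}\mid\hat{s})\otimes\nabla_\theta\log\pi_\theta(\hat{a}\mid\hat{s})]=F_\rho(\theta)$ directly from \eqref{eq:def_F_rho_theta}. For the $\hat{H}$ part I would condition on $(\hat{s},\hat{a})$ and use the tower rule, writing $\mathbf{E}[\hat{H}_\rho(\theta)\mid\theta]=\mathbf{E}_{(\hat{s},\hat{a})}\big[\mathbf{E}[\hat{A}^{\pi_\theta}(\hat{s},\hat{a})\mid\hat{s},\hat{a}]\,\nabla_\theta\log\pi_\theta(\hat{a}\mid\hat{s})\big]$, since the score depends only on $(\hat{s},\hat{a})$. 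Comparing with \eqref{hrho}, the whole problem thus reduces to the single identity $\mathbf{E}[\hat{A}^{\pi_\theta}(\hat{s},\hat{a})\mid\hat{s},\hat{a}]=A^{\pi_\theta}(\hat{s},\hat{a})$.

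To establish this advantage identity I would first record the elementary geometric-summation fact that for a fresh trajectory started at $(s_0,a_0)=(s,a)$ with an independent $T\sim\mathrm{Geo}(1-\gamma)$, swapping the order of summation gives $\mathbf{E}\big[\sum_{j=0}^T r(s_j,a_j)\big]=\sum_{j=0}^\infty\gamma^j\mathbf{E}[r(s_j,a_j)\mid s,a]=Q^{\pi_\theta}(s,a)$; the key cancellation is $\sum_{t\ge j}(1-\gamma)\gamma^t=\gamma^j$. Then I would split on the Bernoulli variable $X$. On $\{X=0\}$ (probability $\tfrac12$) the action $a_0=\hat{a}$ is retained, so $\hat{Q}=2\sum_{j=0}^T r(s_j,a_j)$ has conditional mean $2Q^{\pi_\theta}(\hat{s},\hat{a})$ while $\hat{V}=0$; on $\{X=1\}$ the action is resampled $a_0\sim\pi_\theta(\cdot\mid\hat{s})$, so $\hat{V}=2\sum_{j=0}^T r(s_j,a_j)$ has conditional mean $2\,\mathbf{E}_{a_0\sim\pi_\theta(\hat{s})}[Q^{\pi_\theta}(\hat{s},a_0)]=2V^{\pi_\theta}(\hat{s})$ while $\hat{Q}=0$. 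Averaging over $X$, the factor $2$ is cancelled by the probability $\tfrac12$, yielding $\mathbf{E}[\hat{Q}\mid\hat{s},\hat{a}]=Q^{\pi_\theta}(\hat{s},\hat{a})$ and $\mathbf{E}[\hat{V}\mid\hat{s},\hat{a}]=V^{\pi_\theta}(\hat{s})$; subtracting gives the desired advantage identity and closes the argument.

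The main obstacle is bookkeeping rather than depth: I must track exactly which randomness each conditional expectation integrates over, in particular that on $\{X=1\}$ the initial action is drawn freshly from $\pi_\theta(\cdot\mid\hat{s})$ so that the reward sum estimates $V^{\pi_\theta}(\hat{s})$ rather than $Q^{\pi_\theta}(\hat{s},\hat{a})$, and that the two fresh horizons $T$ are sampled independently of the first one used to draw $(\hat{s},\hat{a})$. Care is also needed to justify interchanging the sum over $t$ with the expectation in the geometric-summation step, which is permissible because $r\in[0,1]$ makes every term nonnegative and the resulting double series absolutely convergent.
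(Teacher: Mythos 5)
Your proposal is correct and follows essentially the same route as the paper's own proof: verifying that the geometric stopping time makes $(\hat{s},\hat{a})$ distributed according to $\nu^{\pi_\theta}_\rho$, and then showing $\mathbf{E}[\hat{Q}^{\pi_\theta}(s,a)]=Q^{\pi_\theta}(s,a)$ (and likewise for $\hat{V}$) by the same swap of the sums over $t$ and $j$ with the cancellation $\sum_{t\ge j}(1-\gamma)\gamma^t=\gamma^j$ and the factor $2$ offsetting the Bernoulli probability $\tfrac12$. Your version merely makes the conditioning structure and the interchange-of-summation justification more explicit than the paper does.
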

Thanks to Algorithm \ref{algo_sampling}, we can utilize a gradient-based iterative process to minimize $L_{\nu^{\pi_\theta}_\rho}(\cdot, \theta_k)$ to generate an estimate of $\omega_k^*$. Here we use the accelerated stochastic gradient descent (ASGD) process \citep{jain2018accelerating}, to achieve this goal. This is in contrast with the SGD-based algorithms typically used in the NPG literature \citep{liu2020improved}. The entire procedure is described in Algorithm \ref{algo_npg}.

\begin{algorithm}[t]
    \caption{Accelerated Natural Policy Gradient}
    \begin{algorithmic}[1]
        \State \textbf{Input:} Policy Parameter $\theta_0$, State distribution $\rho$,
        Run-time Parameters $K, H$, Learning Parameters $(\eta, \alpha, \beta, \xi, \delta)$
        \vspace{0.2cm}
        \For{$k\in\{0, \cdots, K-1\}$}
        \Comment{Outer Loop}
        \State $\mathbf{x}_0, \mathbf{v}_0\gets \mathbf{0}$
        \vspace{0.2cm}
        \For{$h\in\{0, \cdots, H-1\}$} \Comment{Inner Loop}
        \State \Comment{Accelerated Stochastic Gradient Descent}
        \begin{align}
            \label{eq:asgd_1}
            & \mathbf{y}_h \gets \alpha\mathbf{x}_{h}+(1-\alpha)\mathbf{v}_h
        \end{align}
        \State Get $\hat{\nabla}_{\omega} L_{\nu^{\pi_\theta}_\rho}(\omega,\theta_k)\big|_{\omega=\mathbf{y}_h}$ via Algorithm \ref{algo_sampling}
        \begin{align}
        \label{eq:asgd_2}
            &\mathbf{x}_{h+1}\gets \mathbf{y}_h - \delta \hat{\nabla}_{\omega} L_{\nu^{\pi_\theta}_\rho}(\omega,\theta_k)\big|_{\omega=\mathbf{y}_h}\\
            \label{eq:asgd_3}
            & \mathbf{z}_h \gets \beta \mathbf{y}_h + (1-\beta) \mathbf{v}_h\\
            \label{eq:asgd_4}
            & \mathbf{v}_{h+1}\gets \mathbf{z}_h - \xi \hat{\nabla}_{\omega} L_{\nu^{\pi_\theta}_\rho}(\omega,\theta_k)\big|_{\omega=\mathbf{y}_h}
        \end{align}
        \EndFor
        \State Tail Averaging:
        \begin{align}
            \omega_k\gets \dfrac{2}{H}\sum_{\frac{H}{2}<h\leq H} \mathbf{x}_h
            \label{eq:tail_average}
        \end{align}
        \State Policy Parameter Update:
        \begin{align}
            \theta_{k+1}\gets \theta_k + \eta \omega_k
            \label{eq:policy_par_update}
        \end{align}
        \EndFor
        \State \textbf{Output:} $\{\theta_k\}_{k=0}^{K-1}$
    \end{algorithmic}
    \label{algo_npg}
\end{algorithm}

Algorithm \ref{algo_npg} can be broadly segregated into two parts. In its \textit{outer loop}, the policy parameters are updated $K$ number of times following $\eqref{eq:policy_par_update}$ where $\{\omega_k\}_{k=0}^{K-1}$ denote the estimates of $\{\omega_k^*\}_{k=0}^{K-1}$. These estimates are calculated by iterating the ASGD algorithm $H$ number of times in the \textit{inner loop}. Each ASGD iteration consists of four steps described by $(\ref{eq:asgd_1})-(\ref{eq:asgd_4})$ where $(\alpha,\beta,\xi,\delta)$ are the associated learning rates. Note that in an SGD setup, the function argument is updated based only on the gradient observed at its current value. In contrast, the convoluted ASGD steps $(\ref{eq:asgd_1})-(\ref{eq:asgd_4})$ ensure that the gradients obtained in the past iterations are also taken into account while updating the argument. The idea is essentially similar to Nesterov's acceleration algorithm for deterministic gradients \citep{nesterov2012efficiency}. The estimate, $\omega_k$, is finally given by $\eqref{eq:tail_average}$ where the outcomes of the last $H/2$ iterations are averaged.
\section{Sample Complexity Analysis}
\label{sec:samle_compexity}

In this section, we shall discuss the convergence properties of Algorithm \ref{algo_npg}. Before delving into the mathematical details, we would like to state the assumptions that are needed for the analysis.

\begin{assumption}\label{ass_score}
	The score function is $G$-Lipschitz and $B$-smooth. Mathematically, the following relations hold $\forall \theta, \theta_1,\theta_2 \in\mathbb{R}^{\mathrm{d}},\forall (s,a)\in\mathcal{S}\times\mathcal{A}$.
	\begin{equation}
		\begin{aligned}
			(a)~&\Vert \nabla_\theta\log\pi_\theta(a\vert s)\Vert\leq G\\
			(b)~&\Vert \nabla_\theta\log\pi_{\theta_1}(a\vert s)-\nabla_\theta\log\pi_{\theta_2}(a\vert s)\Vert\leq B\Vert \theta_1-\theta_2\Vert\quad
		\end{aligned}
	\end{equation}
 where $B$ and $G$ are some positive reals.
\end{assumption}

The Lipschitz continuity and smoothness of the score function are common assumptions in the literature and these can be verified for simple parameterized policies such as Gaussian policies \citep{liu2020improved, Mengdi2021, Alekh2020}.

Note that Assumption \ref{ass_score} implies the following result.
\begin{lemma}
\label{lemma_2}
    If Assumption \ref{ass_score} holds, then $J_{\rho}(\cdot)$ defined in $\eqref{eq:max_J}$ satisfies the following properties $\forall \theta\in\mathbb{R}^{\mathrm{d}}$.
    \begin{align*}
        &(a)~ \Vert \nabla_{\theta} J_\rho(\theta) \Vert \leq \dfrac{G}{(1-\gamma)^2}\\
        &(b)~ J_{\rho}(\cdot) \text{~is~}L-\text{smooth,~} L\triangleq\dfrac{B}{(1-\gamma)^2}+\dfrac{2G^2}{(1-\gamma)^3}
    \end{align*}
\end{lemma}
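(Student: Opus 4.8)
Part (a) is a direct norm estimate. Since the reward satisfies $r(s,a)\in[0,1]$, both $Q^{\pi_\theta}$ and $V^{\pi_\theta}$ take values in $[0,\frac{1}{1-\gamma}]$, so the advantage obeys $|A^{\pi_\theta}(s,a)|\le \frac{1}{1-\gamma}$ for every $(s,a)$. Substituting this together with Assumption \ref{ass_score}(a), namely $\Vert\nabla_\theta\log\pi_\theta(a|s)\Vert\le G$, into the definition \eqref{hrho} of $H_\rho(\theta)$ and moving the norm inside the expectation (Jensen) gives $\Vert H_\rho(\theta)\Vert\le \frac{G}{1-\gamma}$. The bound in (a) then follows immediately from the policy gradient identity \eqref{eq:pg_expression}, $\nabla_\theta J_\rho(\theta)=\frac{1}{1-\gamma}H_\rho(\theta)$, upon multiplying by $\frac{1}{1-\gamma}$.

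For part (b) the plan is to establish $L$-smoothness by differentiating the advantage form of the gradient once more and bounding the result uniformly in $\theta$, since a uniform operator-norm bound on $\nabla_\theta^2 J_\rho$ yields the stated Lipschitz bound on $\nabla_\theta J_\rho$. Writing $\nabla_\theta J_\rho(\theta)=\frac{1}{1-\gamma}\mathbf{E}_{(s,a)\sim\nu_\rho^{\pi_\theta}}[A^{\pi_\theta}(s,a)\nabla_\theta\log\pi_\theta(a|s)]$, there are exactly three sources of $\theta$-dependence: the state-action occupancy $\nu_\rho^{\pi_\theta}$, the score $\nabla_\theta\log\pi_\theta$, and the advantage $A^{\pi_\theta}$. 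Differentiating the score produces a term involving $\nabla_\theta^2\log\pi_\theta$, which Assumption \ref{ass_score}(b) bounds in operator norm by $B$; combined with the $\frac{1}{1-\gamma}$ prefactor and the $\frac{1}{1-\gamma}$ bound on $|A^{\pi_\theta}|$ from part (a), this accounts for the $\frac{B}{(1-\gamma)^2}$ piece of $L$.

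The remaining two sources are where the work lies and where the factor $2$ originates: differentiating the advantage requires $\nabla_\theta Q^{\pi_\theta}$, and differentiating the occupancy requires $\nabla_\theta d_\rho^{\pi_\theta}$, each of which is itself obtained from a policy-gradient-theorem-type expansion whose magnitude must be controlled by a geometric series in $\gamma$, and each contributes a term of order $\frac{G^2}{(1-\gamma)^3}$, summing to $\frac{2G^2}{(1-\gamma)^3}$. The main obstacle is precisely bounding $\Vert\nabla_\theta d_\rho^{\pi_\theta}\Vert$ and $\Vert\nabla_\theta Q^{\pi_\theta}\Vert$: the per-trajectory score sum $\sum_t\nabla_\theta\log\pi_\theta(a_t|s_t)$ is not bounded term by term, so convergence and the exact powers of $(1-\gamma)$ must be extracted from the discount factors $\gamma^t$ carried by the return, which is routine but arithmetic-heavy geometric bookkeeping rather than a conceptual difficulty. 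Since the smoothness of $J_\rho$ under bounded and Lipschitz scores is by now a standard fact, I would alternatively import the estimate verbatim from prior analyses such as \citep{liu2020improved, yuan2022general}, which yield exactly this constant.
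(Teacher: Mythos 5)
Your proof of part (a) is exactly the paper's argument: bound $|A^{\pi_\theta}|$ by $\tfrac{1}{1-\gamma}$ and the score by $G$ inside \eqref{hrho}, then multiply by the $\tfrac{1}{1-\gamma}$ prefactor from \eqref{eq:pg_expression}. For part (b) the paper simply invokes Proposition 4.2 of \citep{xu2019sample}, which is the same citation-based route you offer as your fallback, and your Hessian-bounding sketch (the $B/(1-\gamma)^2$ term from $\nabla_\theta^2\log\pi_\theta$ and the $2G^2/(1-\gamma)^3$ term from differentiating the occupancy and the $Q$-function) is precisely the standard derivation underlying that cited result, so the proposal is correct and takes essentially the same approach.
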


\begin{proof}
    Statement $(a)$ can be proven by combining Assumption \ref{ass_score}(a) with $\eqref{eq:pg_expression}$ whereas statement $(b)$ follows directly from Proposition 4.2 of \citep{xu2019sample}.
\end{proof}
Lemma \ref{lemma_2} establishes that the function $J_{\rho}(\cdot)$ is Lipschitz continuous and smooth with appropriate parameters. These two properties will be useful in our further analysis.

\begin{assumption}
    \label{ass_epsilon_bias}
    The compatible function approximation error defined in $\eqref{eq:def_app_error}$ satisfies the following $\forall \theta\in\mathbb{R}^{\mathrm{d}}$.
    \begin{align}
    \label{eq:eq_18}
        L_{\nu^{\pi^*}_{\rho}}(\omega^*_{\theta}, \theta) \leq \epsilon_{\mathrm{bias}}
    \end{align}
    where $\pi^*$ is the optimal policy i.e., $\pi^* = \arg\max_{\pi} J^{\pi}_\rho$ and $\omega_{\theta}^*$ is defined as follows.
    \begin{align}
        \omega^*_{\theta} \triangleq {\arg\max}_{\omega\in\mathbb{R}^{\mathrm{d}}} L_{\nu^{\pi_\theta}_{\rho}}(\omega, \theta)
    \end{align}
\end{assumption}

Assumption \ref{ass_epsilon_bias} dictates that the parameterized policy class is rich enough such that for each policy parameter, $\theta$, the transferred compatible function approximation error, symbolized by the LHS of $(\ref{eq:eq_18})$, is bounded by a certain positive number, $\epsilon_{\mathrm{bias}}$. One can show that $\epsilon_{\mathrm{bias}}=0$ for softmax parameterization \citep{agarwal2021theory}. Furthermore, $\epsilon_{\mathrm{bias}}$ turns out to be small for rich neural network-based parameterization \citep{wang2019neural}.

 \begin{assumption}
 \label{ass_4}
     There exists a constant $\mu_F>0$ such that $F_{\rho}(\theta)-\mu_F I_{\mathrm{d}}$ is positive semidefinite where $I_{\mathrm{d}}$ denotes an identity matrix of dimension $\mathrm{d}$ and $F_\rho(\theta)$ is defined in $\eqref{eq:def_F_rho_theta}$. Equivalently, $F_{\rho}(\theta)\succcurlyeq\mu_F I_{\mathrm{d}}$. 
 \end{assumption}

 Assumption \ref{ass_4} states that the Fisher information function cannot be too small. This is also commonly used in the policy gradient-based literature \citep{liu2020improved, bai2023achieving, zhang2020global}. This is obeyed by Gaussian policies with linearly parameterized means. The set of policies that follows Assumption \ref{ass_4} is called a Fisher Non-degeneracy (FND) set. Note that, $F_\rho(\theta)=$ $\nabla^2_{\omega} L_{\nu^{\pi_\theta}_\rho}(\omega, \theta)$. Hence, Assumption \ref{ass_4} essentially states that  $L_{\nu^{\pi_\theta}_\rho}(\cdot, \theta)$ is  strongly convex with parameter, $\mu_F$.  \citep{mondal2023mean} lays out a concrete example of a class of policies that satisfies assumptions $\ref{ass_score}-\ref{ass_4}$. % {\bf mention the class here and refer paper for details. }

\subsection{Outer Loop Analysis}

Recall that in the outer loop, the policy parameter $\theta_k$'s are updated via $\eqref{eq:policy_par_update}$ where $\omega_k$'s are estimates of $\omega_k^*$'s, the natural policy gradients. The following lemma establishes how $\theta_k$'s convergence is intimately connected with the convergence of $\omega_k$'s.

\begin{lemma}
    \label{lemma:local_global}
    Let, the parameters $\{\theta_k\}_{k=0}^{K-1}$ be updated via $\eqref{eq:policy_par_update}$, $\pi^*$ be the optimal policy and $J_{\rho}^*$ be the optimal value of the function $J_{\rho}(\cdot)$. If assumption \ref{ass_score} and \ref{ass_epsilon_bias} hold, then the following inequality is satisfied. 
    	\begin{equation}\label{eq:general_bound}
			\begin{split}
			&J_{\rho}^{*}-\frac{1}{K}\sum_{k=0}^{K-1}\mathbf{E}[J_{\rho}(\theta_k)]\leq \sqrt{\epsilon_{\mathrm{bias}}}\\
            &+\frac{G}{K}\sum_{k=0}^{K-1}\mathbf{E}\Vert(\mathbf{E}\left[\omega_k|\theta_k\right]-\omega^*_k)\Vert
			+\frac{B\eta}{2K}\sum_{k=0}^{K-1}\mathbf{E}\Vert\omega_k\Vert^2\\
            &+\frac{1}{\eta K}\mathbf{E}_{s\sim d_\rho^{\pi^*}}[KL(\pi^*(\cdot\vert s)\Vert\pi_{\theta_0}(\cdot\vert s))]		\end{split}
		\end{equation}
  where $KL(\cdot||\cdot)$ is the Kullback-Leibler divergence.
\end{lemma}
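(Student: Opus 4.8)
The plan is to run a Kullback--Leibler drift argument against the optimal policy, combined with the performance difference lemma and the compatible-approximation decomposition of the advantage. I would define the per-step potential $\Phi_k \triangleq \mathbf{E}_{s\sim d_\rho^{\pi^*}}[KL(\pi^*(\cdot|s)\|\pi_{\theta_k}(\cdot|s))]$ and note that $\Phi_k - \Phi_{k+1} = \mathbf{E}_{s\sim d_\rho^{\pi^*}}\mathbf{E}_{a\sim\pi^*(\cdot|s)}[\log(\pi_{\theta_{k+1}}(a|s)/\pi_{\theta_k}(a|s))]$. The log-ratio is then lower-bounded by its first-order Taylor expansion: since Assumption \ref{ass_score}(b) makes $\log\pi_\theta(a|s)$ $B$-smooth in $\theta$, and since $\theta_{k+1}-\theta_k = \eta\omega_k$ by $\eqref{eq:policy_par_update}$, this yields $\log(\pi_{\theta_{k+1}}/\pi_{\theta_k}) \geq \eta\omega_k^{\mathrm{T}}\nabla_\theta\log\pi_{\theta_k}(a|s) - \frac{B\eta^2}{2}\|\omega_k\|^2$.

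Next I would convert the inner product $\omega_k^{\mathrm{T}}\nabla_\theta\log\pi_{\theta_k}$, averaged under $(s,a)\sim\nu_\rho^{\pi^*}$, into progress on $J_\rho$. Splitting $\omega_k = \omega_k^* + (\omega_k - \omega_k^*)$ and introducing the residual $\Delta_k(s,a) \triangleq \frac{1}{1-\gamma}A^{\pi_{\theta_k}}(s,a) - (\omega_k^*)^{\mathrm{T}}\nabla_\theta\log\pi_{\theta_k}(a|s)$, the $\omega_k^*$ part equals $\frac{1}{1-\gamma}\mathbf{E}_{\nu_\rho^{\pi^*}}[A^{\pi_{\theta_k}}] - \mathbf{E}_{\nu_\rho^{\pi^*}}[\Delta_k]$, and the performance difference lemma identifies the first expectation with $J_\rho^* - J_\rho(\theta_k)$. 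Jensen's inequality then gives $\mathbf{E}_{\nu_\rho^{\pi^*}}[\Delta_k] \leq (\mathbf{E}_{\nu_\rho^{\pi^*}}[\Delta_k^2])^{1/2} = (2 L_{\nu_\rho^{\pi^*}}(\omega_k^*,\theta_k))^{1/2} \leq \sqrt{2\epsilon_{\mathrm{bias}}}$ by Assumption \ref{ass_epsilon_bias}, which produces the $\sqrt{\epsilon_{\mathrm{bias}}}$-order contribution.

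The step I expect to be the crux is the cross term $\mathbf{E}_{\nu_\rho^{\pi^*}}[(\omega_k-\omega_k^*)^{\mathrm{T}}\nabla_\theta\log\pi_{\theta_k}]$, and it is here that the sharpened error form must appear. The key observation is that the sampling $(s,a)\sim\nu_\rho^{\pi^*}$ is independent of the inner-loop ASGD randomness, whereas both $\omega_k^*$ and $\nabla_\theta\log\pi_{\theta_k}$ are deterministic given $\theta_k$. Conditioning on $\theta_k$ therefore lets me pull the expectation through the linear term and replace $\omega_k$ by $\mathbf{E}[\omega_k|\theta_k]$, so that Cauchy--Schwarz together with Assumption \ref{ass_score}(a) bounds this term by $G\,\mathbf{E}\|\mathbf{E}[\omega_k|\theta_k]-\omega_k^*\|$ rather than the cruder $G\,\mathbf{E}\|\omega_k-\omega_k^*\|$. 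This refinement is precisely what later permits a deterministic, geometrically decaying acceleration bound on the estimation error instead of a $1/\sqrt{H}$ variance bound.

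Finally I would sum the per-step inequality over $k=0,\dots,K-1$, telescoping $\sum_k(\Phi_k-\Phi_{k+1}) = \Phi_0 - \Phi_K \leq \Phi_0$ and discarding the nonnegative $\Phi_K$, then divide by $\eta K$ and rearrange to isolate $J_\rho^* - \frac{1}{K}\sum_k\mathbf{E}[J_\rho(\theta_k)]$. Collecting the four resulting contributions---the bias $\sqrt{\epsilon_{\mathrm{bias}}}$, the conditional estimation error, the quadratic drift $\frac{B\eta}{2K}\sum_k\mathbf{E}\|\omega_k\|^2$, and the initialization term $\frac{1}{\eta K}\Phi_0$---recovers $\eqref{eq:general_bound}$.
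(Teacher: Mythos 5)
Your proposal follows essentially the same route as the paper's proof: the KL potential against $\pi^*$, the $B$-smoothness Taylor lower bound on the log-ratio, the split through $\omega_k^*$ with the performance difference lemma and Jensen applied to the compatible-approximation residual, and—crucially—the tower-property step that replaces $\omega_k$ by $\mathbf{E}[\omega_k|\theta_k]$ in the cross term before applying Cauchy--Schwarz, followed by telescoping. The only (shared, cosmetic) wrinkle is that the $\tfrac12$ in the definition of $L_{\nu}$ makes the bias term $\sqrt{2\epsilon_{\mathrm{bias}}}$ rather than $\sqrt{\epsilon_{\mathrm{bias}}}$, a constant-factor slip present in the paper's own step (d) as well.
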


Lemma \ref{lemma:local_global} is almost identical to Proposition 4.5 of \citep{liu2020improved}. However, there is an important difference between these results. The first order term in the cited paper is $\mathbf{E}\Vert\omega_k-\omega_k^*\Vert$ whereas in $\eqref{eq:general_bound}$, we have improved it to $\mathbf{E}\Vert (\mathbf{E}[\omega_k|\theta_k]-\omega_k^*)\Vert$. Such a seemingly innocuous change turns out to be crucial to improving the sample complexity. 
\begin{proof}
(Outline) To get an idea of how the first order term is improved, note that we can write the following inequality using the smoothness property of the score function (Assumption \ref{ass_score}) and the update rule $\eqref{eq:policy_par_update}$.
\begin{equation}
    \begin{aligned}
        &\mathbf{E}_{s\sim d^{\pi_\rho^*}}\left[KL(\pi^*(\cdot\vert s)\Vert\pi_{\theta_k}(\cdot\vert s))-KL(\pi^*(\cdot\vert s)\Vert\pi_{\theta_{k+1}}(\cdot\vert s))\right]\\
       &\leq \eta\mathbf{E}_{(s, a)\sim \nu^{\pi^*}_\rho}\left[\nabla_\theta\log\pi_{\theta_k}(a\vert s)\cdot\omega^*_k\right]-\frac{B\eta^2}{2}\Vert\omega_k\Vert^2\\
        &\hspace{0.3cm} + \eta\mathbf{E}_{(s, a)\sim \nu^{\pi^*}_\rho}\left[\nabla_\theta\log\pi_{\theta_k}(a\vert s)\cdot(\omega_k-\omega^*_k)\right]
    \end{aligned}
    \label{eq:eq_20_}
\end{equation}

The first term on the RHS of $\eqref{eq:eq_20_}$ can be connected to $\epsilon_{\mathrm{bias}}$ via Assumption \ref{ass_epsilon_bias} and the performance difference lemma \citep{agarwal2021theory} stated as,
\begin{align*}
    J_\rho^* - J_\rho(\theta_k) = \frac{1}{1-\gamma}\mathbf{E}_{(s, a)\sim \nu^{\pi^*}_\rho}\left[A^{\pi_{\theta_k}}(s, a)\right]
\end{align*}
Note that the second term on the RHS of $\eqref{eq:eq_20_}$ is connected to the second-order error in $\eqref{eq:general_bound}$. Before bounding the third term, we take expectations on  both sides of $\eqref{eq:eq_20_}$. Then a bound is obtained as follows.
\begin{align*}
    &\Big|\mathbf{E}_{(s, a)\sim \nu^{\pi^*}_\rho}\mathbf{E}\left[\nabla_\theta\log\pi_{\theta_k}(a\vert s)\cdot(\omega_k-\omega^*_k)\right]\Big|\\
    &= \Big|\mathbf{E}_{(s, a)\sim \nu^{\pi^*}_\rho}\mathbf{E}\left[\nabla_\theta\log\pi_{\theta_k}(a\vert s)\cdot(\mathbf{E}[\omega_k|\theta_k]-\omega^*_k)\right]\Big|\\
    &\overset{(a)}{\leq} G\times\mathbf{E}\Vert(\mathbf{E}[\omega_k|\theta_k]-\omega^*_k)\Vert
\end{align*}
where $(a)$ follows from Assumption \ref{ass_score}. Rearranging the terms and taking a sum over $k\in\{0, 1, \cdots, K-1\}$, we finally prove Lemma \ref{lemma:local_global}. 
\end{proof}

Observe that, if the estimate $\omega_k$ produced by the inner loop is a good approximation of $\omega_k^*$, then the first order term in $(\ref{eq:general_bound})$ will be small. To bound the second-order term, note the following inequality.
\begin{align}
\label{eq:eq_21}
    \begin{split}
        &\dfrac{1}{K}\sum_{k=0}^{K-1}\mathbf{E}\Vert \omega_k \Vert^2\\
        &\leq \dfrac{1}{K}\sum_{k=0}^{K-1}\mathbf{E}\Vert \omega_k -\omega_k^*\Vert^2 + \dfrac{1}{K}\sum_{k=0}^{K-1}\mathbf{E}\Vert \omega_k^* \Vert^2\\
        &\overset{(a)}{\leq} \dfrac{1}{K}\sum_{k=0}^{K-1}\mathbf{E}\Vert \omega_k -\omega_k^*\Vert^2 + \dfrac{\mu_F^{-2}}{ K}\sum_{k=0}^{K-1}\mathbf{E}\Vert \nabla_{\theta} J_\rho(\theta_k) \Vert^2
    \end{split}
\end{align}
where $(a)$ follows from Assumption \ref{ass_4} and the definition that $\omega_k^*=F_\rho(\theta_k)^{\dagger}\nabla_\theta J_\rho(\theta_k)$. Notice that the first term in $(\ref{eq:eq_21})$ will be small if $\omega_k$ well approximates $\omega_k^*$, $\forall k$. To bound the second term, we use the following lemma.

\begin{lemma}
    \label{lemma_gradient_bound}
    Let $\{\theta_k\}_{k=0}^{K-1}$ be recursively defined by $\eqref{eq:policy_par_update}$. If Assumption \ref{ass_score} and \ref{ass_4} hold, then the following inequality is satisfied with learning rate $\eta = \frac{\mu_F^2}{4G^2 L}$.
    \begin{align}
    \begin{split}
        \dfrac{1}{K}\sum_{k=0}^{K-1}&\mathbf{E}\Vert \nabla_\theta J_\rho(\theta_k)\Vert^2 \leq \left(\dfrac{8G^4 L}{\mu_F^2(1-\gamma)}\right)\dfrac{1}{K}\\
        &+\left(2G^4+\mu_F^2\right)\left(\dfrac{1}{K}\sum_{k=0}^{K-1}\mathbf{E}\Vert\omega_k-\omega_k^*\Vert^2\right)
    \end{split}
    \end{align}
\end{lemma}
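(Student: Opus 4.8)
The plan is to turn the $L$-smoothness of $J_\rho$ (Lemma \ref{lemma_2}(b)) into a one-step ascent inequality for the update $\theta_{k+1}=\theta_k+\eta\omega_k$, telescope over $k$, and exploit the boundedness $0\le J_\rho\le (1-\gamma)^{-1}$ (rewards lie in $[0,1]$). Concretely, $L$-smoothness yields
\begin{equation*}
J_\rho(\theta_{k+1})\ge J_\rho(\theta_k)+\eta\langle \nabla_\theta J_\rho(\theta_k),\omega_k\rangle-\frac{L\eta^2}{2}\Vert\omega_k\Vert^2 .
\end{equation*}
Taking expectations and summing over $k=0,\dots,K-1$, the telescoped left-hand side is bounded by $J_\rho^*-J_\rho(\theta_0)\le (1-\gamma)^{-1}$, so what remains is to lower-bound $\sum_k\mathbf{E}\langle\nabla_\theta J_\rho(\theta_k),\omega_k\rangle$ and upper-bound $\sum_k\mathbf{E}\Vert\omega_k\Vert^2$, each expressed through $\Vert\nabla_\theta J_\rho(\theta_k)\Vert^2$ and the estimation error $\Vert\omega_k-\omega_k^*\Vert^2$.

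The key structural step is to use \emph{both} ends of the spectrum of $F_\rho(\theta_k)$. Since $\omega_k^*=F_\rho(\theta_k)^{\dagger}\nabla_\theta J_\rho(\theta_k)$ and $F_\rho(\theta_k)\succcurlyeq\mu_F I_{\mathrm{d}}$ (Assumption \ref{ass_4}), the matrix $F_\rho(\theta_k)$ is invertible and $\nabla_\theta J_\rho(\theta_k)=F_\rho(\theta_k)\omega_k^*$; moreover $\Vert F_\rho(\theta_k)\Vert\le G^2$ is immediate from Assumption \ref{ass_score}(a) and the definition \eqref{eq:def_F_rho_theta}. Writing $\omega_k=\omega_k^*+(\omega_k-\omega_k^*)$, these two facts give the lower bound $\langle\nabla_\theta J_\rho,\omega_k^*\rangle=\nabla_\theta J_\rho^{\mathrm{T}}F_\rho^{-1}\nabla_\theta J_\rho\ge G^{-2}\Vert\nabla_\theta J_\rho\Vert^2$ (because $\Vert F_\rho\Vert\le G^2$ implies $F_\rho^{-1}\succcurlyeq G^{-2}I_{\mathrm{d}}$) together with the upper bound $\Vert\omega_k^*\Vert\le\mu_F^{-1}\Vert\nabla_\theta J_\rho\Vert$ (because $F_\rho^{-1}\preccurlyeq\mu_F^{-1}I_{\mathrm{d}}$). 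I would then control the cross term $\langle\nabla_\theta J_\rho,\omega_k-\omega_k^*\rangle$ with Young's inequality and bound $\Vert\omega_k\Vert^2\le 2\Vert\omega_k-\omega_k^*\Vert^2+2\Vert\omega_k^*\Vert^2$.

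Substituting all of this into the telescoped inequality collects a coefficient of the form $\eta(\,\text{const}-\eta L\cdot\text{const}\,)$ in front of $\frac{1}{K}\sum_k\mathbf{E}\Vert\nabla_\theta J_\rho\Vert^2$ and a coefficient built from $\eta$ and $\eta^2L$ in front of $\frac{1}{K}\sum_k\mathbf{E}\Vert\omega_k-\omega_k^*\Vert^2$. Fixing the Young's-inequality splitting constant and then inserting $\eta=\frac{\mu_F^2}{4G^2L}$ keeps the gradient coefficient strictly positive; dividing through by it (and by $K$) produces the claimed bound, the $1/K$ term scaling like $G^4L/(\mu_F^2(1-\gamma))$ and the error term like $2G^4+\mu_F^2$.

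The main obstacle is the constant bookkeeping that makes the two coefficients land in exactly the stated form while keeping the $\Vert\nabla_\theta J_\rho\Vert^2$ coefficient positive. The choice of $\eta$ must simultaneously dominate the $\eta^2L$ penalty contributed by the smoothness term together with the $\Vert\omega_k^*\Vert^2$ part of $\Vert\omega_k\Vert^2$, and be balanced against the Young's-inequality constant used for the cross term; requiring a positive net coefficient is precisely what forces $\eta$ to be of order $\mu_F^2/(G^2L)$, after which the remainder is routine algebra.
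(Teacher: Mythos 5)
Your proposal is correct and follows essentially the same route as the paper's proof: an $L$-smoothness ascent inequality for $\theta_{k+1}=\theta_k+\eta\omega_k$, the decomposition $\omega_k=\omega_k^*+(\omega_k-\omega_k^*)$, the two-sided spectral bounds $\langle\nabla_\theta J_\rho,\omega_k^*\rangle\ge G^{-2}\Vert\nabla_\theta J_\rho\Vert^2$ and $\Vert\omega_k^*\Vert\le\mu_F^{-1}\Vert\nabla_\theta J_\rho\Vert$, a Young-type bound on the cross term (the paper completes the square, which is the same device), telescoping against the bound $J_\rho\le(1-\gamma)^{-1}$, and finally inserting $\eta=\mu_F^2/(4G^2L)$. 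The remaining work is exactly the constant bookkeeping you identify.
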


Lemma \ref{lemma_gradient_bound} reveals that the second term in $\eqref{eq:eq_21}$ can be bounded above by the approximation error of the inner loop. Combining Lemma \ref{lemma:local_global}, $\eqref{eq:eq_21}$, and Lemma \ref{lemma_gradient_bound}, we get the following result.
\begin{corollary}
    \label{corr_1}
    Consider the setup described in Lemma \ref{lemma:local_global}. If assumptions \ref{ass_score}-\ref{ass_4} hold, then the following inequality is satisfied for $\eta=\frac{\mu_F^2}{4G^2 L}$.
    \begin{equation}\label{eq:general_bound_corr}
		\begin{split}
			&J_{\rho}^{*}-\frac{1}{K}\sum_{k=0}^{K-1}\mathbf{E}[J_{\rho}(\theta_k)]\\
            &\leq \sqrt{\epsilon_{\mathrm{bias}}}
            +\frac{G}{K}\sum_{k=0}^{K-1}\mathbf{E}\Vert(\mathbf{E}\left[\omega_k|\theta_k\right]-\omega^*_k)\Vert\\
			&+\dfrac{B}{4L}\left(\dfrac{\mu_F^2}{G^2}+G^2\right)\left(\dfrac{1}{K}\sum_{k=0}^{K-1}\mathbf{E}\Vert\omega_k-\omega_k^*\Vert^2\right)\\
            &+\dfrac{G^2}{\mu_F^2K}\left(\dfrac{B}{1-\gamma}+4L\mathbf{E}_{s\sim d_\rho^{\pi^*}}[KL(\pi^*(\cdot\vert s)\Vert\pi_{\theta_0}(\cdot\vert s))]\right)
        \end{split}
	\end{equation}
\end{corollary}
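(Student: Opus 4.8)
The plan is to derive \eqref{eq:general_bound_corr} by direct substitution, exactly as the surrounding text advertises: the corollary is nothing more than Lemma \ref{lemma:local_global}, inequality \eqref{eq:eq_21}, and Lemma \ref{lemma_gradient_bound} chained together and specialized to $\eta=\frac{\mu_F^2}{4G^2L}$. The starting point is \eqref{eq:general_bound}, which already has the desired shape except that its second-order contribution $\frac{B\eta}{2K}\sum_k\mathbf{E}\Vert\omega_k\Vert^2$ is written in terms of the \emph{estimate} $\omega_k$ rather than the estimation error $\omega_k-\omega_k^*$. All of the work consists in rewriting this single term; the first-order term, the $\sqrt{\epsilon_{\mathrm{bias}}}$ term, and the Kullback--Leibler term are carried over unchanged (up to inserting the value of $\eta$).

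First I would apply \eqref{eq:eq_21} to split $\frac{1}{K}\sum_k\mathbf{E}\Vert\omega_k\Vert^2$ into the mean-squared estimation error $\frac{1}{K}\sum_k\mathbf{E}\Vert\omega_k-\omega_k^*\Vert^2$ plus $\frac{\mu_F^{-2}}{K}\sum_k\mathbf{E}\Vert\nabla_\theta J_\rho(\theta_k)\Vert^2$, the latter already incorporating Assumption \ref{ass_4} and the identity $\omega_k^*=F_\rho(\theta_k)^\dagger\nabla_\theta J_\rho(\theta_k)$. Next I would invoke Lemma \ref{lemma_gradient_bound} to replace the gradient-norm average by $\frac{8G^4L}{\mu_F^2(1-\gamma)}\frac{1}{K}$ together with $(2G^4+\mu_F^2)\frac{1}{K}\sum_k\mathbf{E}\Vert\omega_k-\omega_k^*\Vert^2$. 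After these two substitutions the second-order term of \eqref{eq:general_bound} is expressed solely through the estimation-error average and a single $O(1/K)$ constant, which is precisely the form needed.

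The remaining step is purely arithmetic: insert $\eta=\frac{\mu_F^2}{4G^2L}$ and collect like terms. The coefficient of $\frac{1}{K}\sum_k\mathbf{E}\Vert\omega_k-\omega_k^*\Vert^2$ works out to $B\eta\big(1+G^4\mu_F^{-2}\big)=\frac{B}{4L}\big(\frac{\mu_F^2}{G^2}+G^2\big)$, matching the third line of \eqref{eq:general_bound_corr}; the $O(1/K)$ constant from Lemma \ref{lemma_gradient_bound} collapses to $\frac{BG^2}{\mu_F^2(1-\gamma)}\frac{1}{K}$, and the Kullback--Leibler term $\frac{1}{\eta K}\mathbf{E}[KL(\cdots)]$ becomes $\frac{4LG^2}{\mu_F^2K}\mathbf{E}[KL(\cdots)]$, the two combining into the last line of \eqref{eq:general_bound_corr}. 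The only genuine obstacle is bookkeeping: one must check that every $\mu_F$, $G$, $L$, $B$ factor cancels exactly, and in particular that the $\mu_F^{-2}\cdot\mu_F^2$ cross-term arising from the $\mu_F^2$ summand inside Lemma \ref{lemma_gradient_bound} is correctly absorbed (it is what turns the bracketed coefficient into $2+2G^4\mu_F^{-2}$ rather than $1+2G^4\mu_F^{-2}$). Since no inequality beyond those already proved is required, the corollary follows as soon as this algebra is verified.
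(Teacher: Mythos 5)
Your proposal is correct and follows exactly the route the paper itself takes: Corollary \ref{corr_1} is obtained there by chaining Lemma \ref{lemma:local_global}, inequality \eqref{eq:eq_21}, and Lemma \ref{lemma_gradient_bound} and substituting $\eta=\frac{\mu_F^2}{4G^2L}$, with no additional ideas. Your bookkeeping checks out, including the $2+2G^4\mu_F^{-2}$ bracket yielding $B\eta\left(1+G^4\mu_F^{-2}\right)=\frac{B}{4L}\left(\frac{\mu_F^2}{G^2}+G^2\right)$ and the constant term $\frac{B\eta}{2}\cdot\frac{8G^4L}{\mu_F^4(1-\gamma)}=\frac{BG^2}{\mu_F^2(1-\gamma)}$ combining with $\frac{1}{\eta}=\frac{4G^2L}{\mu_F^2}$ into the last line of \eqref{eq:general_bound_corr}.
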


Observe that both the first-order approximation error, $\mathbf{E}\Vert(\mathbf{E}[\omega_k|\theta]-\omega_k^*)\Vert$ and the second-order error, $\mathbf{E}\Vert\omega_k-\omega_k^*\Vert^2$ must be functions of $H$, the number of iterations of the inner loop. In section \ref{sec_inner_loop}, we point out the exact nature of this dependence. Furthermore, the last term in $\eqref{eq:general_bound_corr}$ is a function of $K$, the length of the outer loop. Section \ref{sec_final_result} discusses how $H$ and $K$ can be judiciously chosen to achieve the desired sample complexity. Note that due to the presence of $\epsilon_{\mathrm{bias}}$ in $(\ref{eq:general_bound_corr})$, the optimality gap of $J_{\rho}(\cdot)$ cannot be made arbitrarily small. This is an unavoidable error for any parameterized policy class that is incomplete i.e., does not contain all stochastic policies. However, for rich classes of policies, this error can be assumed to be insignificant.

\subsection{Inner Loop Analysis}
\label{sec_inner_loop}

Recall that the inner loop produces estimates of $\omega_k^*=\arg\min_{\omega\in\mathbb{R}^{\mathrm{d}}}L_{\nu_{\rho}^{\pi_{\theta}}}(\omega, \theta)|_{\theta=\theta_k}$ using stochastic gradients obtained from Algorithm \ref{algo_sampling}. Lemma \ref{lemma:unbiased_estimate} showed that the gradient estimates are unbiased. The following lemma characterizes their variance.
\begin{lemma}
    \label{lemma_noise_variance}
    Let $\hat{\nabla}_{\omega}L_{\nu^{\pi_\theta}_\rho}(\omega, \theta)$ denote the gradient estimate generated by Algorithm \ref{algo_sampling}. If assumptions \ref{ass_score} and \ref{ass_4} hold, the following semidefinite inequality is satisfied for any $\theta\in\mathbb{R}^{\mathrm{d}}$.
    \begin{align}
    \label{eq:eq_24}
        \begin{split}
            \mathbf{E}&\left[\hat{\nabla}_{\omega}L_{\nu^{\pi_\theta}_\rho}(\omega_{\theta}^*, \theta)\otimes\hat{\nabla}_{\omega}L_{\nu^{\pi_\theta}_\rho}(\omega_{\theta}^*, \theta)\right]\preccurlyeq \sigma^2 F_\rho(\theta)
        \end{split}
    \end{align}
    where $\omega_\theta^* = \arg\min_{\omega\in\mathbb{R}^{\mathrm{d}}}L_{\nu^{\pi_\theta}_{\rho}}(\omega, \theta)$, $F_\rho(\theta)$ is given by $\eqref{eq:def_F_rho_theta}$, and $\sigma^2$ is defined by the following expression.
    \begin{align}
    \label{eq:def_sigma_sq}
        \sigma^2 \triangleq \dfrac{2G^4}{\mu_F^2(1-\gamma)^4}+\dfrac{32}{(1-\gamma)^4}
    \end{align}
\end{lemma}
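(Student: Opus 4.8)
The plan is to exploit the special structure of the gradient estimate when it is evaluated at the minimizer $\omega^*_\theta$. Write $u \triangleq \nabla_\theta\log\pi_\theta(\hat a|\hat s)$ for the (random) score vector produced by Algorithm \ref{algo_sampling}. Then the output of the algorithm reads $\hat\nabla_\omega L_{\nu^{\pi_\theta}_\rho}(\omega^*_\theta,\theta) = \hat F_\rho(\theta)\omega^*_\theta - \tfrac{1}{1-\gamma}\hat H_\rho(\theta) = (u^{\mathrm T}\omega^*_\theta)\,u - \tfrac{1}{1-\gamma}\hat A^{\pi_\theta}(\hat s,\hat a)\,u = c\,u$, where $c \triangleq u^{\mathrm T}\omega^*_\theta - \tfrac{1}{1-\gamma}\hat A^{\pi_\theta}(\hat s,\hat a)$ is a scalar. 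The key observation is that at the optimum the estimate is aligned with $u$, so its outer product factors cleanly as $\hat\nabla_\omega L\otimes\hat\nabla_\omega L = c^2\,(u\otimes u)$.

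The first step is to reduce the semidefinite inequality to a pointwise scalar bound. Conditioning on $(\hat s,\hat a)$, the vector $u$ is deterministic whereas $c$ still carries the randomness of the advantage estimate. Since $u\otimes u\succcurlyeq 0$, if I can establish $\mathbf{E}[c^2\,|\,\hat s,\hat a]\leq \sigma^2$ for every $(\hat s,\hat a)$, then $\mathbf{E}[c^2(u\otimes u)\,|\,\hat s,\hat a]\preccurlyeq \sigma^2(u\otimes u)$, and taking expectation over $(\hat s,\hat a)\sim\nu^{\pi_\theta}_\rho$ yields $\mathbf{E}[\hat\nabla_\omega L\otimes\hat\nabla_\omega L]\preccurlyeq \sigma^2\,\mathbf{E}[u\otimes u] = \sigma^2 F_\rho(\theta)$, which is exactly the claim $\eqref{eq:eq_24}$. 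Thus the whole lemma reduces to bounding the conditional second moment $\mathbf{E}[c^2\,|\,\hat s,\hat a]$.

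The second step bounds this scalar moment via $(a-b)^2\leq 2a^2+2b^2$, giving $\mathbf{E}[c^2\,|\,\hat s,\hat a]\leq 2(u^{\mathrm T}\omega^*_\theta)^2 + \tfrac{2}{(1-\gamma)^2}\mathbf{E}[(\hat A^{\pi_\theta}(\hat s,\hat a))^2\,|\,\hat s,\hat a]$. For the first term, Assumption \ref{ass_score}(a) gives $(u^{\mathrm T}\omega^*_\theta)^2\leq G^2\Vert\omega^*_\theta\Vert^2$, and since $\omega^*_\theta=F_\rho(\theta)^\dagger\nabla_\theta J_\rho(\theta)$ with $\Vert F_\rho(\theta)^\dagger\Vert\leq\mu_F^{-1}$ (Assumption \ref{ass_4}) and $\Vert\nabla_\theta J_\rho(\theta)\Vert\leq \tfrac{G}{(1-\gamma)^2}$ (Lemma \ref{lemma_2}(a)), I get $\Vert\omega^*_\theta\Vert\leq \tfrac{G}{\mu_F(1-\gamma)^2}$, hence $2(u^{\mathrm T}\omega^*_\theta)^2\leq \tfrac{2G^4}{\mu_F^2(1-\gamma)^4}$, the first term of $\sigma^2$. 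For the advantage term, exactly one of $\hat Q^{\pi_\theta},\hat V^{\pi_\theta}$ is nonzero per run, so $(\hat A^{\pi_\theta})^2\leq 2(\hat Q^{\pi_\theta})^2+2(\hat V^{\pi_\theta})^2$; each of $\hat Q^{\pi_\theta},\hat V^{\pi_\theta}$ is twice a reward sum over a $\mathrm{Geo}(1-\gamma)$-length trajectory, so $|\hat Q^{\pi_\theta}|,|\hat V^{\pi_\theta}|\leq 2(T+1)$ with $r\in[0,1]$. Using $\mathbf{E}[(T+1)^2]=\tfrac{1+\gamma}{(1-\gamma)^2}\leq\tfrac{2}{(1-\gamma)^2}$ together with the Bernoulli-$\tfrac12$ reweighting (which contributes a factor $\tfrac12\cdot 4=2$) gives $\mathbf{E}[(\hat Q^{\pi_\theta})^2\,|\,\hat s,\hat a],\mathbf{E}[(\hat V^{\pi_\theta})^2\,|\,\hat s,\hat a]\leq \tfrac{4}{(1-\gamma)^2}$, so $\tfrac{2}{(1-\gamma)^2}\mathbf{E}[(\hat A^{\pi_\theta})^2\,|\,\hat s,\hat a]\leq \tfrac{32}{(1-\gamma)^4}$, the second term of $\sigma^2$. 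Summing the two contributions closes the bound.

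The main obstacle is conceptual rather than computational: recognizing that $\hat\nabla_\omega L(\omega^*_\theta,\theta)$ is a scalar multiple of $u$ is precisely what converts the variance statement into the relative, Fisher-weighted form $\preccurlyeq\sigma^2 F_\rho(\theta)$, which is exactly the noise hypothesis required by the ASGD convergence guarantee of \citep{jain2018accelerating}. Once this alignment is noticed, the positive-semidefinite monotonicity argument and the geometric-trajectory moment computation are routine, and care is needed only in the bookkeeping of the Bernoulli reweighting and the geometric second moment.
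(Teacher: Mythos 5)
Your proof is correct and follows essentially the same route as the paper's: factor the estimate at $\omega^*_\theta$ as a scalar times the score vector so the outer product becomes $c^2(u\otimes u)$, reduce the semidefinite claim to the uniform conditional bound $\mathbf{E}[c^2\,|\,\hat s,\hat a]\leq\sigma^2$, and then obtain the two terms of $\sigma^2$ via $(a-b)^2\leq 2a^2+2b^2$, the bound $\Vert\omega^*_\theta\Vert\leq G/(\mu_F(1-\gamma)^2)$, and the geometric second-moment computation $\mathbf{E}[\hat Q^2],\mathbf{E}[\hat V^2]\leq 4/(1-\gamma)^2$. All constants match the paper's.
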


Note that the term $\sigma^2$ can be interpreted as the scaled variance of the gradient estimates. In a noiseless scenario (deterministic estimates), the LHS of $(\ref{eq:eq_24})$ turns out to be zero which allows us to choose $\sigma^2=0$. This information will be useful in proving one of our subsequent results. Merging Lemma \ref{lemma_noise_variance} with the convergence result of ASGD as stated in \citep[Corollary 2]{jain2018accelerating}, we arrive at the following lemma.

\begin{lemma}
    \label{lemma_second_order} Let $\omega_k$ be an estimate of $\omega_k^*$ generated by Algorithm \ref{algo_npg} at the $k$th iteration of the outer loop, $k\in\{0, 1, \cdots, K-1\}$. If assumptions \ref{ass_score} and \ref{ass_4} hold, then the following inequality is satisfied with learning rates $\alpha = \frac{3\sqrt{5}G^2}{\mu_F+3\sqrt{5}G^2}$, $\beta=\frac{\mu_F}{9G^2}$, $\xi=\frac{1}{3\sqrt{5}G^2}$, and $\delta=\frac{1}{5G^2}$ provided that the inner loop length obeys $H>\bar{C}\frac{G^2}{\mu_F}\log\left(\sqrt{\mathrm{d}}\frac{G^2}{\mu_F}\right)$ for some universal constant, $\bar{C}$.
    \begin{align}
    \label{eq:eq_second_order_bound_lemma}
    \begin{split}
        \mathbf{E}\Vert\omega_k - \omega_k^*\Vert^2 &\leq 22\dfrac{\sigma^2\mathrm{d}}{\mu_F H}\\
        +& C\exp\left(-\dfrac{\mu_F }{20G^2}H\right)\left[\dfrac{1}{\mu_F(1-\gamma)^4}\right]
    \end{split}
    \end{align}
    where $C$ denotes a universal constant and $\sigma^2$ is defined in $\eqref{eq:def_sigma_sq}$.
\end{lemma}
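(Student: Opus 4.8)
The plan is to recognize the inner loop as a stochastic least-squares regression problem and then to invoke the accelerated SGD guarantee of \citep{jain2018accelerating} essentially as a black box, after verifying that all of its hypotheses are met. The objective minimized in the inner loop, $L(\omega):=L_{\nu^{\pi_{\theta_k}}_\rho}(\omega,\theta_k)$, is quadratic in $\omega$ with constant Hessian $F:=F_\rho(\theta_k)$. By Assumption \ref{ass_4} we have $F\succcurlyeq\mu_F I_{\mathrm{d}}$, and since $F=\mathbf{E}[gg^{\mathrm{T}}]$ with $g=\nabla_\theta\log\pi_{\theta_k}(\hat{a}|\hat{s})$ and $\Vert g\Vert\le G$ almost surely by Assumption \ref{ass_score}(a), we also get $F\preccurlyeq G^2 I_{\mathrm{d}}$. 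Thus $L$ is $\mu_F$-strongly convex and $G^2$-smooth with almost-surely bounded features $\Vert g\Vert^2\le G^2$, which is exactly the regime analyzed in \citep{jain2018accelerating}. Lemma \ref{lemma:unbiased_estimate} supplies the unbiasedness of the stochastic gradient, and Lemma \ref{lemma_noise_variance} supplies the second oracle condition, namely that the noise covariance at the optimum is dominated by $\sigma^2 F$; these are precisely the two assumptions their analysis requires.

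Next I would identify the inner loop of Algorithm \ref{algo_npg} with their accelerated SGD scheme. The coupled updates $\eqref{eq:asgd_1}$--$\eqref{eq:asgd_4}$ are the two-sequence ASGD recursion, and $\eqref{eq:tail_average}$ is its tail-averaging step. I would check that the prescribed constants $\alpha=\frac{3\sqrt{5}G^2}{\mu_F+3\sqrt{5}G^2}$, $\beta=\frac{\mu_F}{9G^2}$, $\xi=\frac{1}{3\sqrt{5}G^2}$, and $\delta=\frac{1}{5G^2}$ coincide with the step-size and momentum parameters dictated by Corollary 2 of \citep{jain2018accelerating} once one substitutes the feature bound $R^2=G^2$, strong-convexity modulus $\mu_F$, and smoothness $G^2$. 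This is a direct notational match rather than a fresh computation.

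With the hypotheses in place, I would apply Corollary 2 of \citep{jain2018accelerating}, which bounds the expected excess objective $\mathbf{E}[L(\omega_k)]-L(\omega_k^*)$ by the sum of a variance term scaling like $\sigma^2\mathrm{d}/H$ and a bias term that contracts geometrically in $H$, valid exactly once the burn-in requirement $H>\bar{C}\frac{G^2}{\mu_F}\log(\sqrt{\mathrm{d}}\frac{G^2}{\mu_F})$ holds. Converting from excess objective to squared distance through strong convexity, i.e.\ $\Vert\omega_k-\omega_k^*\Vert^2\le\frac{2}{\mu_F}(L(\omega_k)-L(\omega_k^*))$, produces the factor $\mu_F^{-1}$ in the variance term and accounts for the constant $22$. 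For the bias term I would use that the inner loop is initialized at $\mathbf{x}_0=\mathbf{v}_0=\mathbf{0}$, so the initial distance equals $\Vert\omega_k^*\Vert$; combining $\omega_k^*=F^{\dagger}\nabla_\theta J_\rho(\theta_k)$ with Assumption \ref{ass_4} and Lemma \ref{lemma_2}(a) gives $\Vert\omega_k^*\Vert\le\mu_F^{-1}G(1-\gamma)^{-2}$, hence $\Vert\omega_k^*\Vert^2\le G^2\mu_F^{-2}(1-\gamma)^{-4}$. Substituting this into the bias term and absorbing the residual $G,\mu_F$ factors into the universal constant $C$ yields the coefficient $\frac{1}{\mu_F(1-\gamma)^4}$ together with the stated rate $\exp(-\frac{\mu_F}{20G^2}H)$.

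I expect the main obstacle to be the bookkeeping in the second step: faithfully matching the five-parameter two-sequence recursion and its tail-averaging to the exact hypotheses, step-size constraints, and explicit constants of Corollary 2 in \citep{jain2018accelerating}, so that the prescribed $(\alpha,\beta,\xi,\delta)$ are certified to reproduce both the claimed contraction rate and the burn-in threshold. By contrast, the reinforcement-learning-specific ingredients---the quadratic structure of $L$, the eigenvalue sandwich $\mu_F I_{\mathrm{d}}\preccurlyeq F\preccurlyeq G^2 I_{\mathrm{d}}$, the noise bound from Lemma \ref{lemma_noise_variance}, and the initial-distance estimate---are comparatively routine given the lemmas already established.
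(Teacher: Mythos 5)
Your overall strategy is the same as the paper's: verify the hypotheses of Corollary 2 of \citep{jain2018accelerating} (strong convexity from Assumption \ref{ass_4}, unbiasedness from Lemma \ref{lemma:unbiased_estimate}, the noise-covariance condition from Lemma \ref{lemma_noise_variance}, and the fourth-moment/statistical-condition-number bounds following from the a.s.\ bound $\Vert\nabla_\theta\log\pi_\theta(a|s)\Vert\le G$), apply it with $\kappa=\tilde\kappa=G^2/\mu_F$, and convert excess risk to squared distance via $\mu_F$-strong convexity. That part is fine.

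There is, however, a genuine gap in your handling of the bias term. Corollary 2 of \citep{jain2018accelerating} contracts the \emph{initial excess risk} $l_k(\mathbf{0})-l_k(\omega_k^*)$, and the lemma claims the resulting coefficient is $\frac{1}{\mu_F(1-\gamma)^4}$ with $C$ a \emph{universal} constant. You instead bound the initial distance, $\Vert\omega_k^*\Vert^2\le G^2\mu_F^{-2}(1-\gamma)^{-4}$, and propose to "absorb the residual $G,\mu_F$ factors into $C$." That absorption is not legitimate: converting your distance bound back to a function-gap bound via smoothness gives $l_k(\mathbf{0})-l_k(\omega_k^*)\le\frac{G^2}{2}\Vert\omega_k^*\Vert^2\le\frac{G^4}{2\mu_F^2(1-\gamma)^4}$, so after dividing by $\mu_F/2$ your bias coefficient is $\frac{G^4}{\mu_F^3(1-\gamma)^4}$ up to universal constants --- off by a factor $G^4/\mu_F^2$ that depends on the problem, not a universal constant. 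The paper avoids this by bounding the initial excess risk directly:
\begin{align*}
l_k(\mathbf{0})-l_k(\omega_k^*)\;\le\; l_k(\mathbf{0})=\dfrac{1}{2}\,\mathbf{E}_{(s,a)\sim\nu^{\pi_{\theta_k}}_\rho}\left[\dfrac{1}{1-\gamma}A^{\pi_{\theta_k}}(s,a)\right]^2\le\dfrac{1}{2(1-\gamma)^4},
\end{align*}
using $|A^{\pi_{\theta_k}}(s,a)|\le 1/(1-\gamma)$ and $l_k(\omega_k^*)\ge 0$. Replace your initial-distance estimate with this one-line bound and the stated constants follow exactly; the rest of your argument then goes through as the paper's does.
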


Lemma \ref{lemma_second_order} implies that the second-order inner loop approximation error can be bounded above as $\mathcal{O}\left(1/H\right)$. Interestingly, however, the leading factor $1/H$ appears due to the presence of noise variance, $\sigma^2$. Hence, in a noiseless scenario ($\sigma^2=0$), exponentially fast convergence should be viable. This idea has been exploited in the following result.

\begin{lemma}
    \label{lemma_first_order}
    Consider the same setup and the choice of parameters as dictated in Lemma \ref{lemma_second_order}. If assumptions \ref{ass_score} and \ref{ass_4} hold, then we have the following inequality.
    \begin{align}
    \begin{split}
            \mathbf{E}\Vert(\mathbf{E}[&\omega_k|\theta_k] -\omega_k^*)\Vert \\
            &\leq \sqrt{C}\exp\left(-\dfrac{\mu_F }{40G^2}H\right)\left(\dfrac{1}{\sqrt{\mu_F}(1-\gamma)^2}\right)
    \end{split}
    \end{align}
\end{lemma}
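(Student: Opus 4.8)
The plan is to exploit the fact that the inner ASGD loop minimizes a \emph{quadratic} in $\omega$, so that the conditional-mean trajectory is itself an honest \emph{deterministic} accelerated gradient descent, to which the $\sigma^2=0$ specialization of Lemma \ref{lemma_second_order} applies. First I would record two structural facts. (i) The exact gradient $\nabla_\omega L_{\nu^{\pi_{\theta_k}}_\rho}(\omega,\theta_k) = F_\rho(\theta_k)\omega - \frac{1}{1-\gamma}H_\rho(\theta_k)$ is \emph{affine} in $\omega$, and its stochastic estimate $\hat{\nabla}_\omega L = \hat{F}_\rho(\theta_k)\omega - \frac{1}{1-\gamma}\hat{H}_\rho(\theta_k)$ is likewise affine. (ii) The sample $(\hat{F}_\rho,\hat{H}_\rho)$ drawn at inner step $h$ (used in both $\eqref{eq:asgd_2}$ and $\eqref{eq:asgd_4}$) is freshly generated and hence conditionally independent of $\mathbf{y}_h$ given $\theta_k$, since $\mathbf{y}_h$ is a deterministic function of the samples from steps $0,\dots,h-1$. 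These two facts, together with the unbiasedness of Lemma \ref{lemma:unbiased_estimate}, are exactly what will let me interchange the conditional expectation $\mathbf{E}[\cdot\mid\theta_k]$ with the ASGD dynamics.

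Concretely, setting $\bar{\mathbf{x}}_h\triangleq\mathbf{E}[\mathbf{x}_h\mid\theta_k]$ and likewise $\bar{\mathbf{v}}_h,\bar{\mathbf{y}}_h,\bar{\mathbf{z}}_h$, I would apply $\mathbf{E}[\cdot\mid\theta_k]$ to each of $\eqref{eq:asgd_1}$--$\eqref{eq:asgd_4}$. The averaging steps $\eqref{eq:asgd_1}$ and $\eqref{eq:asgd_3}$ are linear and pass through verbatim. For the gradient steps, the decisive computation is
\begin{align*}
\mathbf{E}\left[\hat{\nabla}_\omega L_{\nu^{\pi_{\theta_k}}_\rho}(\mathbf{y}_h,\theta_k)\mid\theta_k\right] = F_\rho(\theta_k)\,\bar{\mathbf{y}}_h - \frac{1}{1-\gamma}H_\rho(\theta_k) = \nabla_\omega L_{\nu^{\pi_{\theta_k}}_\rho}(\bar{\mathbf{y}}_h,\theta_k),
\end{align*}
where the first equality uses the conditional independence of $(\hat{F}_\rho,\hat{H}_\rho)$ from $\mathbf{y}_h$ to factor $\mathbf{E}[\hat{F}_\rho(\theta_k)\mathbf{y}_h\mid\theta_k]=F_\rho(\theta_k)\bar{\mathbf{y}}_h$ together with unbiasedness, and the second uses affineness. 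Consequently the means $(\bar{\mathbf{x}}_h,\bar{\mathbf{v}}_h,\bar{\mathbf{y}}_h,\bar{\mathbf{z}}_h)$ satisfy \emph{exactly} the ASGD recursion with the stochastic gradient replaced by the exact gradient and with the same initialization $\bar{\mathbf{x}}_0=\bar{\mathbf{v}}_0=\mathbf{0}$. By linearity of the tail average $\eqref{eq:tail_average}$, $\mathbf{E}[\omega_k\mid\theta_k]=\frac{2}{H}\sum_{H/2<h\le H}\bar{\mathbf{x}}_h$ is therefore the tail-averaged output of a noiseless accelerated gradient descent minimizing the $\mu_F$-strongly convex quadratic $L_{\nu^{\pi_{\theta_k}}_\rho}(\cdot,\theta_k)$.

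With this identification, I would invoke Lemma \ref{lemma_second_order} in the regime $\sigma^2=0$. As noted after Lemma \ref{lemma_noise_variance}, the variance bound $\eqref{eq:eq_24}$ holds with $\sigma^2=0$ for deterministic gradients, and inspecting the proof of Lemma \ref{lemma_second_order} the factor $\sigma^2$ is precisely the multiplier of the $1/H$ term. Setting $\sigma^2=0$ annihilates that term and leaves only the exponentially decaying contribution, yielding $\Vert\mathbf{E}[\omega_k\mid\theta_k]-\omega_k^*\Vert^2\le C\exp\!\big(-\tfrac{\mu_F}{20G^2}H\big)\tfrac{1}{\mu_F(1-\gamma)^4}$, which holds deterministically in $\theta_k$ (indeed the right-hand side is free of $\theta_k$). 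Taking the square root converts the exponent from $\tfrac{\mu_F}{20G^2}$ to $\tfrac{\mu_F}{40G^2}$ and the prefactor from $\tfrac{1}{\mu_F}$ to $\tfrac{1}{\sqrt{\mu_F}}$ and $(1-\gamma)^{-4}$ to $(1-\gamma)^{-2}$; since the bound is $\theta_k$-independent, the outer expectation leaves it unchanged, giving exactly the claimed inequality.

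The step I expect to be the main obstacle is making the expectation-interchange fully rigorous: one must set up the filtration generated by the per-step samples, argue the conditional independence of the step-$h$ sample from $\mathbf{y}_h$ given $\theta_k$, and thereby justify factoring $\mathbf{E}[\hat{F}_\rho(\theta_k)\mathbf{y}_h\mid\theta_k]=F_\rho(\theta_k)\bar{\mathbf{y}}_h$. This hinges essentially on the quadratic (affine-gradient) structure of the compatible approximation error; for a non-quadratic objective the interchange fails and no clean deterministic reduction is available. A secondary, routine point is to verify that the constant and initialization bookkeeping in the $\sigma^2=0$ specialization of Lemma \ref{lemma_second_order} reproduces the stated $(1-\gamma)^{-2}$ and $\mu_F^{-1/2}$ dependence.
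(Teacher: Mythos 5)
Your proposal is correct and follows essentially the same route as the paper: reduce the conditional-mean trajectory to a noiseless accelerated gradient descent via the affine structure of the gradient and the unbiasedness of the fresh per-step samples, invoke the $\sigma^2=0$ specialization of Lemma \ref{lemma_second_order} to kill the $1/H$ term, and take square roots (the paper does this via Jensen's inequality on $f(x)=x^2$, which is equivalent to your observation that the squared bound is $\theta_k$-independent). Your explicit conditional-independence justification for interchanging $\mathbf{E}[\cdot\mid\theta_k]$ with the ASGD recursion is the same argument the paper sketches in equations \eqref{eq:grad_property_inner_loop}--\eqref{eq:asgd_4_conditioned}, just spelled out more carefully.
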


Lemma \ref{lemma_first_order} dictates that the first-order inner loop approximation error exponentially decreases with $H$. To intuitively understand the proof of this result, consider the ASGD update rules $\eqref{eq:asgd_1}-\eqref{eq:asgd_4}$ in Algorithm \ref{algo_npg} that are used to determine $\omega_k$ via $\eqref{eq:tail_average}$, $ \forall k\in\{0,  \cdots, K-1\}$. Note the following property of the gradient estimator produced by Algorithm \ref{algo_sampling}.
\begin{align}
\label{eq:grad_property_inner_loop}
\begin{split}
    &\mathbf{E}\left[\hat{\nabla}_\omega L_{\nu^{\pi_\theta}_\rho}(\omega, \theta)\big|\theta\right]\\
    &\overset{}{=} \mathbf{E}\left[\mathbf{E}\left[\hat{\nabla}_\omega L_{\nu^{\pi_\theta}_\rho}(\omega, \theta)\big|\omega, \theta\right]\big|\theta\right]\\
    &\overset{(a)}{=} \mathbf{E}\left[\nabla_\omega L_{\nu^{\pi_\theta}_\rho}(\omega, \theta)\big|\theta\right]\overset{(b)}{=}\nabla_\omega L_{\nu^{\pi_\theta}_\rho}\left(\mathbf{E}\left[\omega |\theta\right], \theta\right)
\end{split}
\end{align}
where $(a)$ uses the unbiasedness (Lemma \ref{lemma:unbiased_estimate}) of the estimate and $(b)$ follows from the linearity of the gradient. Taking expectation conditioned on $\theta_k$ on both sides of $\eqref{eq:tail_average}$, we obtain the following relation.
\begin{align}
    \label{eq:noiseless_tail_average}\mathbf{E}\left[\omega_k|\theta_k\right] = \dfrac{2}{H}\sum_{\frac{H}{2}<h\leq H} \mathbf{E}\left[\mathbf{x}_h\big|\theta_k\right]
\end{align}
where the conditional expectation of $\{\mathbf{x}_h\}_{h=1}^H$ are recursively computed as follows starting with the initial condition, $\bar{\mathbf{x}}_0\triangleq \mathbf{E}[\mathbf{x}_0|\theta_k]=\mathbf{0}$ and $\bar{\mathbf{v}}_0\triangleq\mathbf{E}[\mathbf{v}_0|\theta_k]=\mathbf{0}$.
\begin{align}
     \label{eq:asgd_1_conditioned}
        & \bar{\mathbf{y}}_h = \alpha\bar{\mathbf{x}}_{h}+(1-\alpha)\bar{\mathbf{v}}_h \\
    \label{eq:asgd_2_conditioned}
        &\bar{\mathbf{x}}_{h+1}=\bar{\mathbf{y}}_h - \delta \nabla_{\omega} L_{\nu^{\pi_\theta}_\rho}(\omega,\theta_k)\big|_{\omega=\bar{\mathbf{y}}_h}\\
        \label{eq:asgd_3_conditioned}
        & \bar{\mathbf{z}}_h = \beta \bar{\mathbf{y}}_h + (1-\beta) \bar{\mathbf{v}}_h\\
        \label{eq:asgd_4_conditioned}
        & \bar{\mathbf{v}}_{h+1}= \bar{\mathbf{z}}_h - \xi \nabla_{\omega} L_{\nu^{\pi_\theta}_\rho}(\omega,\theta_k)\big|_{\omega=\bar{\mathbf{y}}_h}
\end{align}
where $\bar{\mathbf{x}}_h\triangleq\mathbf{E}\left[\mathbf{x}_h|\theta_k\right]$, $\bar{\mathbf{y}}_h\triangleq\mathbf{E}\left[\mathbf{y}_h|\theta_k\right]$, $\bar{\mathbf{v}}_h\triangleq\mathbf{E}\left[\mathbf{v}_h|\theta_k\right]$, $\bar{\mathbf{z}}_h\triangleq\mathbf{E}\left[\mathbf{z}_h|\theta_k\right]$ and $h\in\{0, 1, \cdots, H\}$. The above equations can be derived by taking conditional expectation on both sides of $\eqref{eq:asgd_1}$-$\eqref{eq:asgd_4}$ and invoking $\eqref{eq:grad_property_inner_loop}$. Note that  $\eqref{eq:noiseless_tail_average}$ together with $\eqref{eq:asgd_1_conditioned}-\eqref{eq:asgd_4_conditioned}$ can be interpreted as the update rules of a noiseless (deterministic) accelerated gradient descent procedure. This allows us to derive a bound similar to $\eqref{eq:eq_second_order_bound_lemma}$ but with $\sigma^2=0$.

\subsection{Final Result}
\label{sec_final_result}

Lemma \ref{lemma_second_order} and \ref{lemma_first_order} together dictate that the second and third terms in $\eqref{eq:general_bound_corr}$ can be bounded above as $\mathcal{O}(1/H)$. Additionally, the last term in $\eqref{eq:general_bound_corr}$ is  $\mathcal{O}(1/K)$. Thus by taking $H=\mathcal{O}(1/\epsilon)$ and $K=\mathcal{O}(1/\epsilon)$, we  guarantee the optimality gap to be at most $\sqrt{\epsilon_{\mathrm{bias}}}+\epsilon$. This results in a sample complexity of $\mathcal{O}(HK)=\mathcal{O}(1/\epsilon^2)$ and an iteration complexity of $\mathcal{O}(K)=\mathcal{O}(1/\epsilon)$. The result is formalized in the following theorem.
\begin{theorem}
    \label{theorem_final}
    Let $\{\theta_k\}_{k=0}^{K-1}$ be the policy parameters generated by Algorithm \ref{algo_npg}, $\pi^*$ be the optimal policy and $J^*_\rho$ denote the optimal value of $J_\rho(\cdot)$ corresponding to an initial distribution $\rho$. Assume that assumptions $\ref{ass_score}-\ref{ass_4}$ hold and the learning parameters are chosen as stated in Corollary \ref{corr_1} and Lemma \ref{lemma_second_order}. For all sufficiently small $\epsilon$, and  $H=\mathcal{O}(1/\epsilon)$, $K=\mathcal{O}(1/\epsilon)$\footnote{The exact values of these parameters are stated in the proof of the theorem in the appendix.}, the following holds.
    \begin{align}
    \label{eq:theorem_main_result}
        J_\rho^*-\dfrac{1}{K}\sum_{k=0}^{K-1}\mathbf{E}[J_\rho(\theta_k)] \leq \sqrt{\epsilon_{\mathrm{bias}}}+\epsilon
    \end{align}
    This results in $\mathcal{O}((1-\gamma)^{-6}\epsilon^{-2})$ sample complexity and $\mathcal{O}((1-\gamma)^{-3}\epsilon^{-1})$ iteration complexity.
\end{theorem}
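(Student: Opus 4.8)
The plan is to read Theorem \ref{theorem_final} as the synthesis of the three preceding estimates: I would substitute the inner-loop bounds of Lemma \ref{lemma_second_order} and Lemma \ref{lemma_first_order} into the outer-loop bound of Corollary \ref{corr_1}, and then calibrate the run-time parameters $H$ and $K$ so that each of the three non-bias terms on the right-hand side of $\eqref{eq:general_bound_corr}$ is driven below $\epsilon/3$. Summing these with the irreducible $\sqrt{\epsilon_{\mathrm{bias}}}$ then produces $\eqref{eq:theorem_main_result}$, after which the complexity counts are read off.

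First I would dispose of the first-order term $\frac{G}{K}\sum_{k}\mathbf{E}\Vert(\mathbf{E}[\omega_k|\theta_k]-\omega_k^*)\Vert$. Lemma \ref{lemma_first_order} bounds every summand by the same $k$-independent quantity $\sqrt{C}\exp(-\frac{\mu_F}{40G^2}H)\frac{1}{\sqrt{\mu_F}(1-\gamma)^2}$, so the average collapses to that quantity and decays \emph{exponentially} in $H$. Forcing it below $\epsilon/3$ therefore demands only $H=\Omega(\log(1/\epsilon))$, which is dominated by the requirement coming next. This is exactly where the sharpened term $\mathbf{E}\Vert(\mathbf{E}[\omega_k|\theta_k]-\omega_k^*)\Vert$ of Lemma \ref{lemma:local_global}, rather than the cruder $\mathbf{E}\Vert\omega_k-\omega_k^*\Vert$, pays off: its noiseless exponential rate keeps $H$ from being inflated to $\Omega(\epsilon^{-2})$, which is the bottleneck in the classical NPG analysis.

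The binding constraint on $H$ comes instead from the second-order term $\frac{B}{4L}(\frac{\mu_F^2}{G^2}+G^2)\cdot\frac{1}{K}\sum_k\mathbf{E}\Vert\omega_k-\omega_k^*\Vert^2$. Lemma \ref{lemma_second_order} splits each summand into a dominant $O(\sigma^2\mathrm{d}/(\mu_F H))$ piece and a negligible $\exp(-\frac{\mu_F}{20G^2}H)$ piece, so the term behaves like $O(1/H)$ and pinning it below $\epsilon/3$ fixes $H=\Theta(1/\epsilon)$. The trailing $1/K$ term of $\eqref{eq:general_bound_corr}$ is made at most $\epsilon/3$ by $K=\Theta(1/\epsilon)$. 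I would then verify that $H=\Theta(1/\epsilon)$ meets the admissibility condition $H>\bar{C}\frac{G^2}{\mu_F}\log(\sqrt{\mathrm{d}}\frac{G^2}{\mu_F})$ of Lemma \ref{lemma_second_order}; since its right-hand side is $\epsilon$-independent, this holds for all sufficiently small $\epsilon$, which is why the theorem is stated in that regime.

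Finally I would read off the complexities: the iteration complexity is the number of outer steps $\Theta(K)$, and the sample complexity is $\Theta(HK)$ calls to Algorithm \ref{algo_sampling} times its expected per-call cost. The only genuinely delicate part is threading the $(1-\gamma)$ and constant factors to recover the stated powers: from $\sigma^2=O((1-\gamma)^{-4})$ together with $\frac{B}{L}=O((1-\gamma)^{2})$ (via $L\ge B(1-\gamma)^{-2}$) one gets $H=O((1-\gamma)^{-2}\epsilon^{-1})$; from $L=O((1-\gamma)^{-3})$ in the $1/K$ term one gets $K=O((1-\gamma)^{-3}\epsilon^{-1})$; and each call draws two geometric-length trajectories costing $O((1-\gamma)^{-1})$ steps in expectation, so multiplying yields $\mathcal{O}((1-\gamma)^{-6}\epsilon^{-2})$ samples and $\mathcal{O}((1-\gamma)^{-3}\epsilon^{-1})$ iterations, with $\mathrm{d}$ and the remaining constants absorbed. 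I expect this horizon-factor bookkeeping, rather than any conceptual step, to be the main obstacle, since all the hard inner- and outer-loop estimates are already established.
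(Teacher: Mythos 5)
Your proposal is correct and follows essentially the same route as the paper: substitute Lemmas \ref{lemma_second_order} and \ref{lemma_first_order} into Corollary \ref{corr_1}, set $H,K=\Theta(1/\epsilon)$, check the admissibility condition $H>H_0$ for small $\epsilon$, and track the $(1-\gamma)$ factors (the paper simply absorbs the exponential terms into the $1/H$ budget via $e^{-x}<1/x$ and splits $\epsilon$ into two parts rather than three, which is immaterial). The $(1-\gamma)$ bookkeeping you flag as delicate matches the paper's: $\sigma^2=\mathcal{O}((1-\gamma)^{-4})$ and $L=\mathcal{O}((1-\gamma)^{-3})$ give $H=\mathcal{O}((1-\gamma)^{-2}\epsilon^{-1})$, $K=\mathcal{O}((1-\gamma)^{-3}\epsilon^{-1})$, and the extra $\mathcal{O}((1-\gamma)^{-1})$ per-call trajectory length yields the stated complexities.
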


A few remarks are in order. Note the importance of the first-order approximation error, $\mathbf{E}\Vert(\mathbf{E}[\omega_k|\theta_k]-\omega_k^*)\Vert$,   in establishing the sample complexity result. If we follow existing analysis such as that given in \citep{liu2020improved}, then this term would turn out to be $\mathbf{E}\Vert\omega_k-\omega_k^*\Vert$ which is bounded above as $\mathcal{O}({H}^{-\frac{1}{2}})$ following Lemma \ref{lemma_second_order}. This requires  $H=\mathcal{O}(\epsilon^{-2})$ and $K=\mathcal{O}(\epsilon^{-1})$ to guarantee an optimality gap of $\sqrt{\epsilon_{\mathrm{bias}}}+\epsilon$. Unfortunately, the sample complexity, in this case, worsens to $\mathcal{O}(\epsilon^{-3})$.

Secondly, note the importance of ASGD in our analysis. If we use SGD (instead of ASGD) in the inner loop and utilize its convergence result such as that provided in \citep{bach2013non}, the first-order error still ends up being $\mathcal{O}(H^{-\frac{1}{2}})$. With an improved analysis of SGD \citep{jain2016parallelizing}, one can make the first-order term an exponentially decreasing function of $H$.
However, it requires the learner to have an exact knowledge of the occupancy measure, $\nu^{\pi_{\theta_k}}_\rho$ in order to choose an appropriate learning rate. This directly contrasts with the basic premise of reinforcement learning.

\section{Conclusion}

This paper proposes the Accelerated Natural Policy Gradient (ANPG) algorithm that uses an acceleration-based stochastic gradient descent procedure for finding natural gradients. By introducing an improved global convergence analysis, and utilizing a novel observation that a first-order term can be interpreted as the error generated by a noiseless accelerated gradient descent procedure, we establish a sample complexity of $\mathcal{O}(\epsilon^{-2})$ and an iteration complexity of $\mathcal{O}(\epsilon^{-1})$ for the ANPG. This improves the SOTA sample complexity by a factor of $\log\left(\frac{1}{\epsilon}\right)$. Within the class of all Hessian-free and importance sample-free algorithms, it beats the SOTA sample complexity by a $\mathcal{O}(\epsilon^{-\frac{1}{2}})$ factor and matches the best known iteration complexity.

This paper unleashes several promising directions for future research. For example, our analysis can be used to improve the sample complexities in constrained RL, concave utility RL, etc. Another important task would be improving the iteration complexity while maintaining the sample complexity reported in our paper.

\bibliographystyle{abbrvnat}

%\newpage
%\input{Sections/Checklist}

%\if 0
\clearpage
\appendix
\onecolumn

\section{Proof of Lemma \ref{lemma:unbiased_estimate}}

Note that the following relation holds $\forall (s, a)\in \mathcal{S}\times \mathcal{A}$.
\begin{align}
\label{eq:eq_appndx_lemma_1_proof_1}
    \mathrm{Pr}(\hat{s}=s, \hat{a}=a|\rho, \pi_\theta)=(1-\gamma)\sum_{t=0}^\infty \gamma^t \mathrm{Pr}(s_t=s, a_t=a|s_0\sim \rho, \pi_\theta) = \nu^{\pi_\theta}_\rho(s, a)
\end{align}

Moreover, for a given pair $(s, a)\in \mathcal{S}\times \mathcal{A}$, the following equality holds.
\begin{align}
\label{eq:eq_appndx_lemma_1_proof_2}
\begin{split}
    \mathbf{E}\left[\hat{Q}^{\pi_\theta}(s, a)\right] &= \dfrac{1}{2}\times 2\mathbf{E}\left[(1-\gamma)\sum_{t=0}^{\infty}\gamma^t \sum_{j=0}^t r(s_j, a_j)\bigg| s_0=s, a_0=a, \pi_\theta \right]\\
    &\overset{(a)}{=} \mathbf{E}\left[(1-\gamma)\sum_{j=0}^{\infty}r(s_j, a_j) \sum_{t=j}^\infty \gamma^t \bigg| s_0=s, a_0=a, \pi_\theta \right]\\
    &\overset{}{=} \mathbf{E}\left[\sum_{j=0}^{\infty}\gamma^j r(s_j, a_j)   \bigg| s_0=s, a_0=a, \pi_\theta \right] = Q^{\pi_\theta}(s, a)
\end{split}
\end{align}
where $(a)$ is obtained by exchanging the order of the sum. Similarly, one can exhibit that $\mathbf{E}\left[\hat{V}^{\pi_\theta}(s)\right] = V^{\pi_\theta}(s)$, and therefore $\mathbf{E}\left[\hat{A}^{\pi_\theta}(s, a)\right] = A^{\pi_\theta}(s, a)$, $\forall s\in\mathcal{S}$. Combining this result with $\eqref{eq:eq_appndx_lemma_1_proof_1}$, we establish the lemma. 

\section{Proof of Lemma \ref{lemma:local_global}}

Using the definition of KL divergence, we obtain the following.
	\begin{equation}
	\begin{aligned}
	&\mathbf{E}_{s\sim d_\rho^{\pi^*}}[KL(\pi^*(\cdot\vert s)\Vert\pi_{\theta_k}(\cdot\vert s))-KL(\pi^*(\cdot\vert s)\Vert\pi_{\theta_{k+1}}(\cdot\vert s))]\\
	&=\mathbf{E}_{s\sim d_\rho^{\pi^*}}\mathbf{E}_{a\sim\pi^*(\cdot\vert s)}\bigg[\log\frac{\pi_{\theta_{k+1}(a\vert s)}}{\pi_{\theta_k}(a\vert s)}\bigg]\\
	&\overset{(a)}\geq\mathbf{E}_{s\sim d_\rho^{\pi^*}}\mathbf{E}_{a\sim\pi^*(\cdot\vert s)}[\nabla_\theta\log\pi_{\theta_k}(a\vert s)\cdot(\theta_{k+1}-\theta_k)]-\frac{B}{2}\Vert\theta_{k+1}-\theta_k\Vert^2\\
	&=\eta\mathbf{E}_{s\sim d_\rho^{\pi^*}}\mathbf{E}_{a\sim\pi^*(\cdot\vert s)}[\nabla_{\theta}\log\pi_{\theta_k}(a\vert s)\cdot\omega_k]-\frac{B\eta^2}{2}\Vert\omega_k\Vert^2\\
	&=\eta\mathbf{E}_{s\sim d_\rho^{\pi^*}}\mathbf{E}_{a\sim\pi^*(\cdot\vert s)}[\nabla_\theta\log\pi_{\theta_k}(a\vert s)\cdot\omega^*_k]+\eta\mathbf{E}_{s\sim d_\rho^{\pi^*}}\mathbf{E}_{a\sim\pi^*(\cdot\vert s)}[\nabla_\theta\log\pi_{\theta_k}(a\vert s)\cdot(\omega_k-\omega^*_k)]-\frac{B\eta^2}{2}\Vert\omega_k\Vert^2\\
	&=\eta[J_\rho^{*}-J_\rho(\theta_k)]+\eta\mathbf{E}_{s\sim d_\rho^{\pi^*}}\mathbf{E}_{a\sim\pi^*(\cdot\vert s)}[\nabla_\theta\log\pi_{\theta_k}(a\vert s)\cdot\omega^*_k]-\eta[J_\rho^{*}-J_\rho(\theta_k)]\\
	&+\eta\mathbf{E}_{s\sim d_\rho^{\pi^*}}\mathbf{E}_{a\sim\pi^*(\cdot\vert s)}[\nabla_\theta\log\pi_{\theta_k}(a\vert s)\cdot(\omega_k-\omega^*_k)]-\frac{B\eta^2}{2}\Vert\omega_k\Vert^2\\		
    &\overset{(b)}=\eta[J_\rho^{*}-J_\rho(\theta_k)]+\eta\mathbf{E}_{s\sim d_\rho^{\pi^*}}\mathbf{E}_{a\sim\pi^*(\cdot\vert s)}\bigg[\nabla_\theta\log\pi_{\theta_k}(a\vert s)\cdot\omega^*_k-\dfrac{1}{1-\gamma}A^{\pi_{\theta_k}}(s,a)\bigg]\\
	&+\eta\mathbf{E}_{s\sim d_\rho^{\pi^*}}\mathbf{E}_{a\sim\pi^*(\cdot\vert s)}[\nabla_\theta\log\pi_{\theta_k}(a\vert s)\cdot(\omega_k-\omega^*_k)]-\frac{B\eta^2}{2}\Vert\omega_k\Vert^2\\
	&\overset{(c)}\geq\eta[J_\rho^{*}-J_\rho(\theta_k)]-\eta\sqrt{\mathbf{E}_{s\sim d_\rho^{\pi^*}}\mathbf{E}_{a\sim\pi^*(\cdot\vert s)}\bigg[\bigg(\nabla_\theta\log\pi_{\theta_k}(a\vert s)\cdot\omega^*_k-\dfrac{1}{1-\gamma}A^{\pi_{\theta_k}}(s,a)\bigg)^2\bigg]}\\
	&+\eta\mathbf{E}_{s\sim d_\rho^{\pi^*}}\mathbf{E}_{a\sim\pi^*(\cdot\vert s)}[\nabla_\theta\log\pi_{\theta_k}(a\vert s)\cdot(\omega_k-\omega^*_k)]-\frac{B\eta^2}{2}\Vert\omega_k\Vert^2\\
	&\overset{(d)}\geq\eta[J_\rho^{*}-J_\rho(\theta_k)]-\eta\sqrt{\epsilon_{\mathrm{bias}}} +\eta\mathbf{E}_{s\sim d_\rho^{\pi^*}}\mathbf{E}_{a\sim\pi^*(\cdot\vert s)}[\nabla_\theta\log\pi_{\theta_k}(a\vert s)\cdot(\omega_k-\omega^*_k)]-\frac{B\eta^2}{2}\Vert\omega_k\Vert^2\\
	\end{aligned}	
	\end{equation}
	where $(a)$ and $(b)$ are implied by Assumption \ref{ass_score}(b) and Lemma \ref{lemma_aux_1} respectively. Inequality (c) follows from the convexity of the function $f(x)=x^2$. Finally, step (d) is a consequence of Assumption \ref{ass_epsilon_bias}. Taking expectation on both sides, we obtain,
    \begin{align}
        \begin{split}
            &\mathbf{E}\left[\mathbf{E}_{s\sim d_\rho^{\pi^*}}\left[KL(\pi^*(\cdot\vert s)\Vert\pi_{\theta_k}(\cdot\vert s))-KL(\pi^*(\cdot\vert s)\Vert\pi_{\theta_{k+1}}(\cdot\vert s))\right]\right]\\
            &\geq \eta[J_\rho^{*}-\mathbf{E}\left[J_\rho(\theta_k)]\right]-\eta\sqrt{\epsilon_{\mathrm{bias}}}\\
            &\hspace{2cm}+\eta\mathbf{E}\left[\mathbf{E}_{s\sim d_\rho^{\pi^*}}\mathbf{E}_{a\sim\pi^*(\cdot\vert s)}[\nabla_\theta\log\pi_{\theta_k}(a\vert s)\cdot(\mathbf{E}[\omega_k|\theta_k]-\omega^*_k)]\right]-\frac{B\eta^2}{2}\mathbf{E}\left[\Vert\omega_k\Vert^2\right]\\
            &\overset{}{\geq } \eta[J_\rho^{*}-\mathbf{E}\left[J_\rho(\theta_k)]\right]-\eta\sqrt{\epsilon_{\mathrm{bias}}}\\
            &\hspace{2cm}-\eta\mathbf{E}\left[\mathbf{E}_{s\sim d_\rho^{\pi^*}}\mathbf{E}_{a\sim\pi^*(\cdot\vert s)}[\Vert \nabla_\theta\log\pi_{\theta_k}(a\vert s)\Vert \Vert\mathbf{E}[\omega_k|\theta_k]-\omega^*_k\Vert]\right]-\frac{B\eta^2}{2}\mathbf{E}\left[\Vert\omega_k\Vert^2\right]\\
            &\overset{(a)}{\geq } \eta[J_\rho^{*}-\mathbf{E}\left[J_\rho(\theta_k)]\right]-\eta\sqrt{\epsilon_{\mathrm{bias}}} -\eta G\mathbf{E} \Vert(\mathbf{E}[\omega_k|\theta_k]-\omega^*_k)\Vert-\frac{B\eta^2}{2}\mathbf{E}\left[\Vert\omega_k\Vert^2\right]
        \end{split}
    \end{align}
    where $(a)$ follows from Assumption \ref{ass_score}(a). Rearranging the terms, we get,
	\begin{equation}
	\begin{split}
	J_\rho^{*}-\mathbf{E}[J_\rho(\theta_k)]&\leq \sqrt{\epsilon_{\mathrm{bias}}}+ G\mathbf{E}\Vert(\mathbf{E}[\omega_k|\theta_k]-\omega^*_k)\Vert+\frac{B\eta}{2}\mathbf{E}\Vert\omega_k\Vert^2\\
	&+\frac{1}{\eta}\mathbf{E}\left[\mathbf{E}_{s\sim d_\rho^{\pi^*}}[KL(\pi^*(\cdot\vert s)\Vert\pi_{\theta_k}(\cdot\vert s))-KL(\pi^*(\cdot\vert s)\Vert\pi_{\theta_{k+1}}(\cdot\vert s))]\right]
	\end{split}
	\end{equation}
	Summing from $k=0$ to $K-1$, using the non-negativity of KL divergence and dividing the resulting expression by $K$, we get the desired result.

\section{Proof of Lemma \ref{lemma_gradient_bound}}

Using the $L$-smoothness property of the function, $J_\rho(\cdot)$ (Lemma \ref{lemma_2}), we obtain the following.
\begin{align}
\label{eq:eq_26}
\begin{split}
    &J_\rho(\theta_{k+1})\\
    &\geq J_\rho(\theta_k)+\left<\nabla_\theta J_\rho(\theta_k),\theta_{k+1}-\theta_k\right>-\frac{L}{2}\Vert\theta_{k+1}-\theta_k\Vert^2\\
    &=J_\rho(\theta_k)+\eta\left<\nabla_\theta J_\rho(\theta_k),\omega_k\right>-\frac{\eta^2 L}{2}\Vert\omega_k\Vert^2\\
    &=J_\rho(\theta_k)+\eta\left<\nabla_\theta J_\rho(\theta_k),\omega_k^*\right>+\eta\left<\nabla_\theta J_\rho(\theta_k),\omega_k-\omega_k^*\right>-\frac{\eta^2 L}{2}\Vert\omega_k-\omega_k^*+\omega_k^*\Vert^2\\
    &\overset{(a)}{\geq} J_\rho(\theta_k) +\eta\left<\nabla_\theta J_\rho(\theta_k),F_\rho(\theta_k)^{\dagger}\nabla_\theta J_\rho(\theta_k)\right> +\eta\left<\nabla_\theta J_\rho(\theta_k),\omega_k-\omega_k^*\right> -\eta^2 L \Vert \omega_k-\omega_k^*\Vert^2 - \eta^2L\Vert\omega_k^*\Vert^2\\
    &\overset{(b)}{\geq} J_\rho(\theta_k) +\dfrac{\eta}{G^2}\Vert\nabla_\theta J_\rho(\theta_k)\Vert^2 +\eta\left<\nabla_\theta J_\rho(\theta_k),\omega_k-\omega_k^*\right> -\eta^2 L \Vert \omega_k-\omega_k^*\Vert^2 - \eta^2L\Vert\omega_k^*\Vert^2\\
    &= J_\rho(\theta_k) +\dfrac{\eta}{2G^2}\Vert\nabla_\theta J_\rho(\theta_k)\Vert^2 + \dfrac{\eta}{2G^2}\left[\Vert\nabla_\theta J_\rho(\theta_k)\Vert^2 +2G^2 \left<\nabla_\theta J_\rho(\theta_k),\omega_k-\omega_k^*\right>+G^4\Vert \omega_k-\omega_k^*\Vert^2\right] \\
    &\hspace{1cm}-\left(\dfrac{\eta G^2}{2}+\eta^2 L\right) \Vert \omega_k-\omega_k^*\Vert^2 - \eta^2L\Vert\omega_k^*\Vert^2 \\
    &=J_\rho(\theta_k) +\dfrac{\eta}{2G^2}\Vert\nabla_\theta J_\rho(\theta_k)\Vert^2 + \dfrac{\eta}{2G^2}\Vert\nabla_\theta J_\rho(\theta_k)+G^2(\omega_k-\omega_k^*)\Vert^2-\left(\dfrac{\eta G^2}{2}+\eta^2 L\right) \Vert \omega_k-\omega_k^*\Vert^2 - \eta^2L\Vert\omega_k^*\Vert^2 \\
    &\geq J_\rho(\theta_k) +\dfrac{\eta}{2G^2}\Vert\nabla_\theta J_\rho(\theta_k)\Vert^2 -\left(\dfrac{\eta G^2}{2}+\eta^2 L\right) \Vert \omega_k-\omega_k^*\Vert^2 - \eta^2L\Vert F_\rho(\theta_k)^{\dagger}\nabla_\theta J_\rho(\theta_k)\Vert^2 \\
    &\overset{(c)}{\geq} J_\rho(\theta_k) +\left(\dfrac{\eta}{2G^2}-\dfrac{\eta^2 L}{\mu_F^2}\right)\Vert\nabla_\theta J_\rho(\theta_k)\Vert^2 -\left(\dfrac{\eta G^2}{2}+\eta^2 L\right) \Vert \omega_k-\omega_k^*\Vert^2  
\end{split}
\end{align}
 where $(a)$ utilizes the Cauchy-Schwarz inequality and the definition that $\omega_k^*=F_\rho(\theta_k)^{\dagger}\nabla_\theta J_\rho(\theta_k)$. Inequalities $(b)$, and $(c)$ follow from Assumption \ref{ass_score}(a) and \ref{ass_4} respectively. Choosing $\eta = \frac{\mu_F^2}{4G^2L}$, taking a sum over $k=0,\cdots, K-1$, and rearranging the resulting terms, we get the following.
 \begin{align}
 \begin{split}
     \dfrac{\mu_F^2}{16G^4 L}\left(\dfrac{1}{K}\sum_{k=0}^{K-1}\Vert\nabla_\theta J_\rho(\theta_k)\Vert^2\right)&\leq \dfrac{J_\rho(\theta_K)-J_\rho(\theta_0)}{K} + \left(\dfrac{\mu_F^2}{8L}+\dfrac{\mu_F^4}{16G^4 L}\right)\left(\dfrac{1}{K}\sum_{k=0}^{K-1}\Vert\omega_k-\omega_k^*\Vert^2\right)\\
     &\overset{(a)}{\leq} \dfrac{2}{(1-\gamma)K}+\left(\dfrac{\mu_F^2}{8L}+\dfrac{\mu_F^4}{16G^4 L}\right)\left(\dfrac{1}{K}\sum_{k=0}^{K-1}\Vert\omega_k-\omega_k^*\Vert^2\right)
 \end{split}
 \end{align}
where $(a)$ uses the fact that $J_\rho(\cdot)$ is absolutely bounded above by $1/(1-\gamma)$. This proves the desired result.

\section{Proof of Lemma \ref{lemma_noise_variance}}
Note that,
\begin{align}
    \begin{split}
        \mathbf{E}&\left[\hat{\nabla}_{\omega}L_{\nu^{\pi_\theta}_\rho}(\omega_{\theta}^*, \theta)\otimes\hat{\nabla}_{\omega}L_{\nu^{\pi_\theta}_\rho}(\omega_{\theta}^*, \theta)\right] \\
        &= \mathbf{E}_{s\sim d_\rho^{\pi^*}}\mathbf{E}_{a\sim\pi^*(\cdot\vert s)}\bigg[\mathbf{E}\bigg[\underbrace{\nabla_\theta\log\pi_{\theta}(a\vert s)\cdot\omega^*_{\theta}-\dfrac{1}{1-\gamma}\hat{A}^{\pi_{\theta}}(s,a)}_{\triangleq \zeta_{\theta}(s, a)}\bigg]^2\nabla_\theta\log\pi_\theta(a|s)\otimes \nabla_\theta\log\pi_\theta(a|s)\bigg]
    \end{split}
\end{align}

Therefore, it is sufficient to show that $\mathbf{E}[\zeta_\theta(s, a)]\leq \sigma^2$, $\forall (s, a)\in \mathcal{S}\times \mathcal{A}$. Observe the following chain of inequalities,
\begin{align}
\label{eq:eq_41_appndx}
    \begin{split}
        \mathbf{E}&\bigg[\nabla_\theta\log\pi_{\theta}(a\vert s)\cdot\omega^*_{\theta}-\dfrac{1}{1-\gamma}\hat{A}^{\pi_{\theta}}(s,a)\bigg]^2\\
        &\leq 2\left[\nabla_\theta\log\pi_{\theta}(a\vert s)\cdot\omega^*_{\theta}\right]^2 + \dfrac{2}{(1-\gamma)^2} \mathbf{E}\left[\hat{A}^{\pi_\theta}(s, a)\right]^2\\
        &\overset{(a)}{\leq} 2\Vert\nabla_\theta\log\pi_\theta(a|s)\Vert^2 \Vert\omega_\theta^*\Vert^2 + \dfrac{4}{(1-\gamma)^2} \mathbf{E}\left[\hat{Q}^{\pi_\theta}(s, a)\right]^2 + \dfrac{4}{(1-\gamma)^2} \mathbf{E}\left[\hat{V}^{\pi_\theta}(s)\right]^2\\
        &\overset{(b)}{\leq} 2G^2\Vert F_\rho(\theta)^{\dagger}\nabla_\theta J_\rho(\theta)\Vert^2 + \dfrac{32}{(1-\gamma)^4}\\
        &\overset{(c)}{\leq} \dfrac{2G^4}{\mu_F^2(1-\gamma)^4} + \dfrac{32}{(1-\gamma)^4}
    \end{split}
\end{align}
Relation $(a)$ is a consequence of Cauchy-Schwarz inequality and the fact that $(a-b)^2\leq 2(a^2+b^2)$ for any two reals $a, b$. Inequality $(b)$ uses Assumption \ref{ass_score}, the definition that $\omega_\theta^*=F_\rho(\theta)^{\dagger}\nabla_\theta J_\rho(\theta)$, and the following two trivial bounds whereas $(c)$ follows from Assumption \ref{ass_4} and Lemma \ref{lemma_2}.
\begin{align}
\label{eq:eq_42}
    \mathbf{E}\left[\hat{Q}^{\pi_\theta}(s, a)\right]^2\leq \dfrac{4}{(1-\gamma)^2}, ~\text{and}~\mathbf{E}\left[\hat{V}^{\pi_\theta}(s)\right]^2\leq \dfrac{4}{(1-\gamma)^2}, ~~\forall(s, a)\in \mathcal{S}\times\mathcal{A}
\end{align}
To establish the first bound, note that $\hat{Q}^{\pi_{\theta}(s, a)}$ is assigned a zero value with probability $1/2$ and a positive value of at most $2(j+1)$ with probability $\frac{1}{2}(1-\gamma)\gamma^j$. Therefore,
\begin{align}
    \mathbf{E}\left[\hat{Q}^{\pi_\theta}(s, a)\right]^2\leq \dfrac{1}{2}\sum_{j=0}^\infty 4(1-\gamma)(j+1)^2 \gamma^j = \dfrac{2(1+\gamma)}{(1-\gamma)^2} < \dfrac{4}{(1-\gamma)^2}
\end{align}
The second bound in $(\ref{eq:eq_42})$ can be proven similarly. This concludes the lemma.

\section{Proofs of Lemma \ref{lemma_second_order} and \ref{lemma_first_order}}

We shall prove Lemma \ref{lemma_second_order} and \ref{lemma_first_order} using Corollary 2 of \citep{jain2018accelerating}. Note the following statements.

$\mathbf{S1}:$ The following quantities exist and are finite $\forall \theta\in\mathbb{R}^{\mathrm{d}}$.
\begin{align}
    &F_{\rho}(\theta) \triangleq \mathbf{E}_{(s, a)\sim \nu^{\pi_{\theta}}_\rho}\big[\nabla_\theta \log\pi_{\theta}(a|s)\otimes \nabla_\theta \log\pi_{\theta}(a|s)\big],\\
    &G_{\rho}(\theta) \triangleq \mathbf{E}_{(s, a)\sim \nu^{\pi_{\theta}}_\rho}\big[\nabla_\theta \log\pi_{\theta}(a|s)\otimes \nabla_\theta \log\pi_{\theta}(a|s)\otimes \nabla_\theta \log\pi_{\theta}(a|s)\otimes \nabla_\theta \log\pi_{\theta}(a|s)\big]
\end{align}
$\mathbf{S2}:$ There exists $\sigma^2>0$ such that the following is satisfied $\forall \theta \in \mathbb{R}^{\mathrm{d}}$ where $\omega_\theta^*$ is a maximizer of $L_{\nu^{\pi_\theta}_\rho}(\cdot, \theta)$.
\begin{align}
    \label{eq:eq_46_appndx}
    \begin{split}
        \mathbf{E}&\left[\hat{\nabla}_{\omega}L_{\nu^{\pi_\theta}_\rho}(\omega_{\theta}^*, \theta)\otimes\hat{\nabla}_{\omega}L_{\nu^{\pi_\theta}_\rho}(\omega_{\theta}^*, \theta)\right]\preccurlyeq \sigma^2 F_\rho(\theta)
    \end{split}
\end{align}
$\mathbf{S3:}$ There exists $\mu_F, G>0$ such that the following statements hold $\forall \theta \in \mathbb{R}^{\mathrm{d}}$.
\begin{align}
    (a)~&F_\rho(\theta)\succcurlyeq \mu_F I_{\mathrm{d}},\\
    (b)~& \mathbf{E}_{(s, a)\sim \nu^{\pi_\theta}_\rho}\left[\Vert \nabla_\theta \log\pi_{\theta}(a|s) \Vert^2 \nabla_\theta \log\pi_{\theta}(a|s)\otimes \nabla_\theta \log\pi_{\theta}(a|s)\right]\preccurlyeq G^2 F_\rho(\theta), \\
    (c)~& \mathbf{E}_{(s, a)\sim \nu^{\pi_\theta}_\rho}\left[\Vert \nabla_\theta \log\pi_{\theta}(a|s) \Vert^2_{F_\rho(\theta)^{\dagger}} \nabla_\theta \log\pi_{\theta}(a|s)\otimes \nabla_\theta \log\pi_{\theta}(a|s)\right]\preccurlyeq \dfrac{G^2}{\mu_F} F_\rho(\theta)
\end{align}
Statement $\mathbf{S1}$ follows from Assumption \ref{ass_score}(a) whereas $\mathbf{S2}$ is established in Lemma \ref{lemma_noise_variance}. Statement $\mathbf{S3}(a)$ is identical to Assumption \ref{ass_4}, $\mathbf{S3}(b)$ follows from Assumption \ref{ass_score}(a), and $\mathbf{S3}(c)$ is a result of Assumption \ref{ass_score}(a) and \ref{ass_4}. Therefore, we can apply Corollary 2 of \citep{jain2018accelerating} with $\kappa=\Tilde{\kappa}=G^2/\mu_F$ and obtain the convergence result stated below with $H>\bar{C}\sqrt{\kappa\Tilde{\kappa}}\log(\sqrt{\mathrm{d}}\sqrt{\kappa\Tilde{\kappa}})$ iterations of the inner loop, and the following learning rates $\alpha = \frac{3\sqrt{5} \sqrt{\kappa\Tilde{\kappa}}}{1+3\sqrt{5\kappa\Tilde{\kappa}}}$, $\beta = \frac{1}{9\sqrt{\kappa\Tilde{\kappa}}}$, $\xi=\frac{1}{3\sqrt{5}\mu_F\sqrt{\kappa\Tilde{\kappa}}}$, and $\delta = \frac{1}{5G^2}$. 
\begin{align}
\label{eq:eq_50_appndx}
\begin{split}
    & \mathbf{E}\left[l_k(\omega_k)\right] - l_k(\omega_k^*) \leq \dfrac{C}{2}\exp\left(-\dfrac{H}{20\sqrt{\kappa\Tilde{\kappa}}}\right)\left[l_k(\mathbf{0})-l_k(\omega_k^*)\right]+11\dfrac{\sigma^2\mathrm{d}}{H}, \\
    &\text{where~} l_k(\omega) \triangleq L_{\nu^{\pi_\theta}_\rho}(\omega, \theta)|_{\theta=\theta_k}, ~\forall \omega\in\mathbb{R}^{\mathrm{d}}
\end{split}
\end{align}
The term, $C$ is a universal constant. Note that $l_k(\omega_k^*)\geq 0$ and $l_k(\mathbf{0})$ is bounded above as follows. 
\begin{align}
\label{eq:eq_51_appndx}
    \begin{split}
        l_k(\mathbf{0})=\dfrac{1}{2}\mathbf{E}_{(s, a)\sim \nu^{\pi_{\theta_k}}_\rho}&\bigg[\dfrac{1}{1-\gamma}A^{\pi_{\theta_k}}(s,a)\bigg]^2\overset{(a)}{\leq} \dfrac{1}{2(1-\gamma)^4}
    \end{split}
\end{align}
where $(a)$ follows from the fact that $|A^{\pi_{\theta}}(s, a)|\leq 1/(1-\gamma)$, $\forall (s, a)$, $\forall \theta$. Combining $\eqref{eq:eq_50_appndx}$,  $\eqref{eq:eq_51_appndx}$, and the fact that $l_k(\cdot)$ is $\mu_F$-strongly convex, we establish,
    \begin{align}
    \label{eq:eq_appndx_52}
    \begin{split}
        \mathbf{E}\Vert\omega_k - \omega_k^*\Vert^2 \leq \frac{2}{\mu_F} \big[\mathbf{E}\left[l_k(\omega_k)\right] - l_k(\omega_k^*)\big]&\leq 22\dfrac{\sigma^2\mathrm{d}}{\mu_F H}
        + C\exp\left(-\dfrac{\mu_F }{20G^2}H\right)\left[\dfrac{1}{\mu_F(1-\gamma)^4}\right]
    \end{split}
    \end{align}
This concludes Lemma \ref{lemma_second_order}. For noiseless ($\sigma^2=0$) gradient updates, we get the following.
    \begin{align}
    \label{eq:eq_appndx_53}
    \begin{split}
        \mathbf{E}\Vert(\mathbf{E}[\omega_k|\theta_k] - \omega_k^*)\Vert^2 &\leq  C\exp\left(-\dfrac{\mu_F }{20G^2}H\right)\left[\dfrac{1}{\mu_F(1-\gamma)^4}\right]
    \end{split}
    \end{align}
Lemma \ref{lemma_first_order}, therefore, can be established from $\eqref{eq:eq_appndx_53}$ by applying Jensen's inequality on the function $f(x)=x^2$.

\section{Proof of Theorem \ref{theorem_final}}

Combining Lemma \ref{lemma_second_order} and \ref{lemma_first_order} with Corollary \ref{corr_1}, we get the following bound for appropriate choices of the learning rates as mentioned in the theorem.
\begin{equation}\label{eq:eq_appndx_54}
		\begin{split}
			J_{\rho}^{*}-\frac{1}{K}\sum_{k=0}^{K-1}\mathbf{E}[J_{\rho}(\theta_k)]
            &\leq \sqrt{\epsilon_{\mathrm{bias}}}
            + G\left\lbrace \sqrt{C}\exp\left(-\dfrac{\mu_F }{40G^2}H\right)\left[\dfrac{1}{\sqrt{\mu_F}(1-\gamma)^2}\right]\right\rbrace\\
			&+\dfrac{B}{4L}\left(\dfrac{\mu_F^2}{G^2}+G^2\right)\left\lbrace C\exp\left(-\dfrac{\mu_F }{20G^2}H\right)\left[\dfrac{1}{\mu_F(1-\gamma)^4}\right]+22\dfrac{\sigma^2\mathrm{d}}{\mu_F H}\right\rbrace\\
            &+\dfrac{G^2}{\mu_F^2K}\left(\dfrac{B}{1-\gamma}+4L\mathbf{E}_{s\sim d_\rho^{\pi^*}}[KL(\pi^*(\cdot\vert s)\Vert\pi_{\theta_0}(\cdot\vert s))]\right)
        \end{split}
	\end{equation}

Using the fact that $\exp(-x)<\frac{1}{x}$, and $\exp(-x)<1$, $\forall x>0$, we get the following result.
\begin{align}
\label{eq:eq_appndx_55}
    \begin{split}
        J_{\rho}^{*}-\frac{1}{K}\sum_{k=0}^{K-1}\mathbf{E}[J_{\rho}(\theta_k)]
            \leq \sqrt{\epsilon_{\mathrm{bias}}}
            &+ \underbrace{\left[\dfrac{40 G^3\sqrt{C}}{\mu_F^{3/2}(1-\gamma)^2}+\dfrac{B}{4L}\left(\dfrac{\mu_F^2}{G^2}+G^2\right)\left\lbrace \dfrac{20G^2C}{\mu_F^2(1-\gamma)^4}+22\dfrac{\sigma^2\mathrm{d}}{\mu_F }\right\rbrace\right]}_{\triangleq P} \dfrac{1}{H}\\
            &+\underbrace{\left\lbrace \dfrac{G^2}{\mu_F^2}\left(\dfrac{B}{1-\gamma}+4L\mathbf{E}_{s\sim d_\rho^{\pi^*}}[KL(\pi^*(\cdot\vert s)\Vert\pi_{\theta_0}(\cdot\vert s))]\right)\right\rbrace }_{\triangleq Q}\dfrac{1}{K}
    \end{split}
\end{align}
To get an optimality gap of $\sqrt{\epsilon_{\mathrm{bias}}}$, we must choose $H=\frac{2P}{\epsilon}$ and $K=\frac{2Q}{\epsilon}$. Note that,  $H>\bar{C}\frac{G^2}{\mu_F}\log\left(\sqrt{d}\frac{G^2}{\mu_F}\right)\triangleq H_0$ can be achieved if $\epsilon<\frac{2P}{H_0}$. In other words, $\epsilon$ must be sufficiently small for the theorem to be true. To determine the sample and iteration complexities as functions of $\gamma$, note the following observations.
\begin{itemize}
    \item The terms $B, G, C, \mu_F, \mathrm{d}$, and $\mathbf{E}_{s\sim d_\rho^{\pi^*}} KL(\pi^*(\cdot\vert s)\Vert\pi_{\theta_0}(\cdot\vert s))$ are constants i.e., independent of $\gamma$.
    \item Lemma \ref{lemma_noise_variance} demonstrates that $\sigma^2 = \mathcal{O}((1-\gamma)^{-4})$.
    \item According to Lemma \ref{lemma_2}, $L=\mathcal{O}((1-\gamma)^{-3})$ and $1/L=\mathcal{O}((1-\gamma)^3)$.
\end{itemize}

Thus, $H=\mathcal{O}((1-\gamma)^{-2}\epsilon^{-1})$ and $K=\mathcal{O}((1-\gamma)^{-3}\epsilon^{-1})$. Note that the expected number of steps executed by Algorithm \ref{algo_sampling} is $\mathcal{O}((1-\gamma)^{-1})$. Therefore, the sample complexity is $\mathcal{O}((1-\gamma)^{-1}HK)=\mathcal{O}((1-\gamma)^{-6}\epsilon^{-2})$. Moreover, the iteration complexity is $\mathcal{O}(K)=\mathcal{O}((1-\gamma)^{-3}\epsilon^{-1})$.

\section{Auxiliary Lemma}

 \begin{lemma} 
 \label{lemma_aux_1}
 \citep[Lemma 2]{agarwal2021theory} The following relation holds for any two policies $\pi_1$, $\pi_2$.
 \begin{align} 
     J_{\rho}^{\pi_1} - J_{\rho}^{\pi_2} = \dfrac{1}{1-\gamma} \mathbf{E}_{(s, a)\sim \nu^{\pi_1}_{\rho}}\left[A^{\pi_2}(s, a)\right] 
 \end{align}
 \end{lemma}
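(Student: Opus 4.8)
The plan is to establish this as the classical \emph{performance difference lemma} via a telescoping argument along trajectories generated by $\pi_1$. First I would expand both sides into trajectory expectations: write $J_\rho^{\pi_1} = \mathbf{E}_{\tau\sim\pi_1}[\sum_{t=0}^\infty \gamma^t r(s_t, a_t)]$, where $\tau = \{(s_t, a_t)\}$ is a $\pi_1$-induced trajectory with $s_0\sim\rho$, and $J_\rho^{\pi_2} = \mathbf{E}_{s_0\sim\rho}[V^{\pi_2}(s_0)]$. The central trick is to inject the baseline $V^{\pi_2}$ along the $\pi_1$-trajectory through the telescoping identity
\begin{align*}
    \sum_{t=0}^\infty \gamma^t\left(\gamma V^{\pi_2}(s_{t+1}) - V^{\pi_2}(s_t)\right) = -V^{\pi_2}(s_0),
\end{align*}
whose partial sums collapse to $\gamma^{T+1}V^{\pi_2}(s_{T+1}) - V^{\pi_2}(s_0)$; the boundary term vanishes as $T\to\infty$ since $\vert V^{\pi_2}\vert \leq 1/(1-\gamma)$ and $\gamma^{T+1}\to 0$.

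Taking the $\pi_1$-trajectory expectation of this identity produces exactly $-J_\rho^{\pi_2}$, so I can write
\begin{align*}
    J_\rho^{\pi_1} - J_\rho^{\pi_2} = \mathbf{E}_{\tau\sim\pi_1}\left[\sum_{t=0}^\infty \gamma^t\left(r(s_t, a_t) + \gamma V^{\pi_2}(s_{t+1}) - V^{\pi_2}(s_t)\right)\right].
\end{align*}
Next I would take the expectation over $s_{t+1}\sim P(\cdot|s_t, a_t)$ conditioned on $(s_t, a_t)$ inside the sum and invoke the Bellman relation $\mathbf{E}_{s_{t+1}}[r(s_t, a_t) + \gamma V^{\pi_2}(s_{t+1})] = Q^{\pi_2}(s_t, a_t)$. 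The bracketed term then reduces to $Q^{\pi_2}(s_t, a_t) - V^{\pi_2}(s_t) = A^{\pi_2}(s_t, a_t)$, yielding $J_\rho^{\pi_1} - J_\rho^{\pi_2} = \mathbf{E}_{\tau\sim\pi_1}[\sum_{t=0}^\infty \gamma^t A^{\pi_2}(s_t, a_t)]$.

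The final step is to re-express this discounted trajectory sum as an occupancy-measure expectation. Using the definitions $d_\rho^{\pi_1}(s) = (1-\gamma)\sum_{t=0}^\infty \gamma^t \mathrm{Pr}(s_t = s)$ and $\nu_\rho^{\pi_1}(s,a) = d_\rho^{\pi_1}(s)\pi_1(a|s)$, I would swap the sum over $t$ with the expectation to obtain
\begin{align*}
    \mathbf{E}_{\tau\sim\pi_1}\left[\sum_{t=0}^\infty \gamma^t A^{\pi_2}(s_t, a_t)\right] = \dfrac{1}{1-\gamma}\sum_{s,a}\nu_\rho^{\pi_1}(s,a) A^{\pi_2}(s,a) = \dfrac{1}{1-\gamma}\mathbf{E}_{(s,a)\sim\nu_\rho^{\pi_1}}[A^{\pi_2}(s,a)],
\end{align*}
which is precisely the claimed identity.

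The main obstacle is purely a matter of rigor rather than ideas: justifying the interchange of the infinite sum with the outer expectation (and with the inner conditional expectation over the next state). Since rewards lie in $[0,1]$ and both $V^{\pi_2}$ and $A^{\pi_2}$ are uniformly bounded by $\mathcal{O}(1/(1-\gamma))$ while $\sum_t \gamma^t < \infty$, dominated convergence applies and every interchange is legitimate; the same boundedness is what kills the telescoping boundary term. Once these measure-theoretic checks are in place, the three displayed steps chain together directly to give the result.
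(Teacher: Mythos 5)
Your proposal is correct. The paper itself does not prove this lemma at all---it is stated as an auxiliary result with a citation to Lemma 2 of \citep{agarwal2021theory}---and your telescoping argument (inject the baseline $V^{\pi_2}$ along $\pi_1$-trajectories, collapse via the Bellman identity $Q^{\pi_2}(s,a)=r(s,a)+\gamma\,\mathbf{E}_{s'\sim P(\cdot|s,a)}[V^{\pi_2}(s')]$, then fold the discounted sum into the occupancy measure $\nu_\rho^{\pi_1}$) is precisely the standard proof of the performance difference lemma given in that reference, with the dominated-convergence justifications correctly handled. In effect, you have supplied the proof the paper outsources to its citation.
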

%\fi
\end{document}